\let\Ginclude@graphics\@org@Ginclude@graphics 
\newtheorem{assumption}{Assumption}
\newcommand{\mc}[3]{\multicolumn{#1}{#2}{#3}}
\title[DMPS]{Diffusion Model Based Posterior Sampling for  Noisy Linear Inverse Problems}
 \author{\Name{Xiangming Meng} \Email{xiangmingmeng@intl.zju.edu.cn}\\
 \addr Zhejiang University - University of Illinois Urbana-Champaign Institute \\
 Zhejiang University \\
 Haining 314400, China
 \AND
 \Name{Yoshiyuki Kabashima} \Email{kaba@phys.s.u-tokyo.ac.jp}\\
\addr Institute for Physics of Intelligence \&  Department of Physics \\
The University of Tokyo\\
Tokyo 113-0033, Japan
}
\begin{document}

\maketitle


\begin{abstract}
With the rapid development of diffusion models and flow-based generative models, there has been a surge of interests in solving noisy linear inverse problems, e.g., super-resolution, deblurring, denoising, colorization, etc, with generative models. However, while remarkable reconstruction performances have been achieved, their inference time is typically too slow since most of them rely on the seminal diffusion posterior sampling (DPS) framework and thus to approximate the intractable likelihood score, time-consuming gradient calculation through back-propagation is needed. To address this issue, this paper provides a fast and effective solution by proposing a simple closed-form approximation to the likelihood score. For both diffusion and flow-based models, extensive experiments are conducted on various noisy linear inverse problems such as noisy super-resolution, denoising, deblurring, and colorization. In all these tasks, our method (namely DMPS) demonstrates highly competitive or even better reconstruction performances while being significantly faster than all the baseline methods.


\end{abstract}

\begin{keywords}
Inverse problems; diffusion models; flow-based models; image restoration.
\end{keywords}

\section{Introduction}

Many problems in science and engineering such as computer vision and signal processing can be cast as the following noisy linear inverse problems: 
\begin{align}
    {\bf{y}} = {\bf{Ax}}_0+{\bf{n}}, \label{linear model}
\end{align}
where ${\bf{A}} \in \mathbb{R} ^{M \times N}$ is a (known) linear mixing matrix,  ${\bf{n}}\sim \mathcal{N}({\bf{n}};0,\sigma^2 {\bf{I}})$  is an  i.i.d. additive Gaussian noise, and the goal is to recover the unknown target signal ${\bf{x}}_0\in \mathbb{R} ^{N\times 1}$ from the noisy linear measurements ${\bf{y}}\in \mathbb{R} ^{M\times 1}$. Notable examples include a wide class of  image restoration  tasks like  super-resolution (SR) \cite{ledig2017photo}, colorization \cite{zhang2016colorful}, denoising \cite{buades2005review}, deblurring \cite{yuan2007image}, inpainting \cite{bertalmio2000image}, as well as the well-known compressed sensing  (CS) \cite{candes2006robust,candes2008introduction} in signal processing. One big challenge of these linear inverse problems is that they are  ill-posed \cite{o1986statistical}, i.e., the solution to (\ref{linear model}) is not unique (even in the noiseless case). This problem can be tackled from a Bayesian perspective: suppose that the target signal $\bf{x}$ follows a proper \textit{prior distribution} $p(\bf{x})$, given noisy observations $\bf{y}$, one can perform posterior  sampling from  $p({\bf{x}}_0|\bf{y})$ to recover ${\bf{x}}_0$. Hence, an accurate prior $p({\bf{x}}_0)$ is crucial in recovering ${\bf{x}}_0$. Various kinds of priors or structure constraints have been proposed, including sparsity \cite{candes2008introduction},  low-rank \cite{fazel2008compressed}, total variation \cite{candes2006robust}, just to name a few. However, such handcrafted priors might fail to capture the capture more rich structure of natural signals \cite{ulyanov2018deep}.

\begin{figure}[t!]
\centering  
\includegraphics[width=0.9\textwidth]{./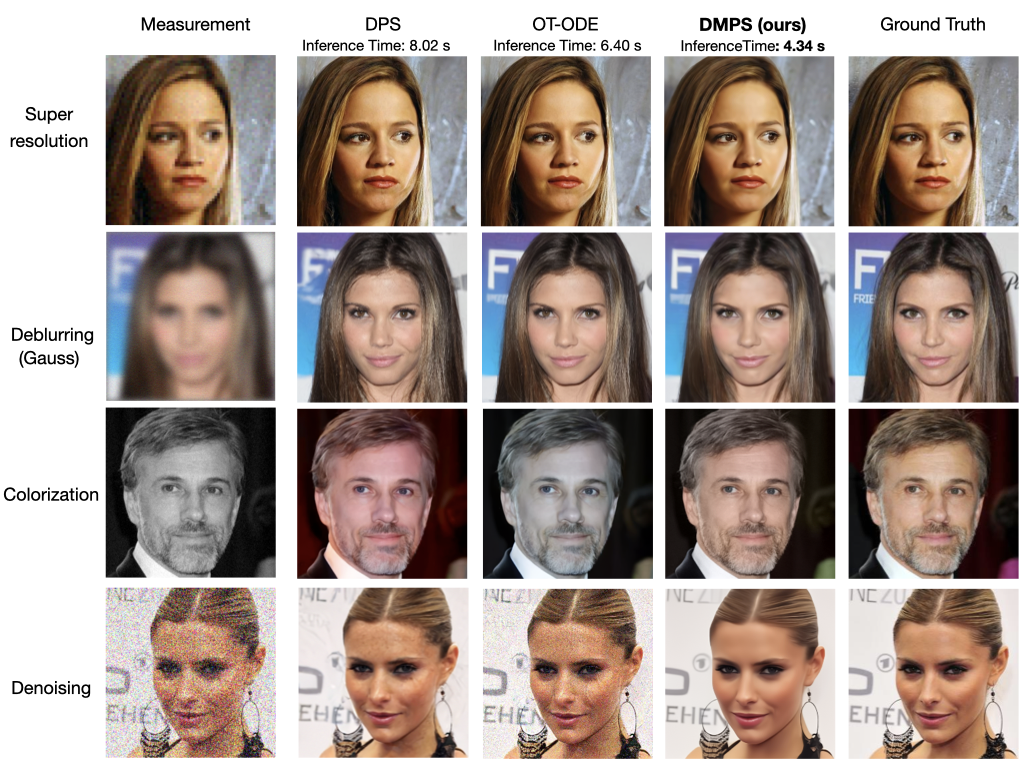}
\caption{Typical results for different image restoration tasks on CelebA-HQ $256\times 256$ validation set, along with the average inference time in seconds. It can be seen that our method (DMPS) achieves highly competitive or even better reconstruction performances with much less inference time. For a fair comparison, all the algorithms are run on the same flow-based model with NFE=50. }
\label{fig:flow-results}
\end{figure}

With the recent advent of  diffusion models \cite{sohl2015deep, song2019generative,ho2020denoising,dhariwal2021diffusion,rombach2022high} and flow-based models \cite{lipman2022flow,liu2022flow,albergo2023stochastic,ma2024sit}, there has been a surge of interests in applying them to solve the linear inverse problems with  remarkable performances \cite{kadkhodaie2020solving, kadkhodaie2021stochastic, jalal2021robust, jalal2021instance,kawar2021snips,kawar2022denoising,  chung2022improving, chung2022diffusion, wang2022zero, meng2023quantized, meng2024qcs,pokle2023training}. One fundamental challenge in this field is computing the score of \textit{noise-perturbed likelihood}  $p({\bf{y}}|{\bf{x}}_t)$, i.e., $\nabla_{{\bf{x}}_t} \log{p({\bf{y}}|{\bf{x}}_t)}$, where ${\bf{x}}_t$ is a noise-perturbed version of ${\bf{x}}_0$ at time instance $t$ defined by the forward process of DM \cite{ho2020denoising, song2019generative}. This is because while $\nabla_{{\bf{x}}_t} \log{p({\bf{y}}|{\bf{x}}_t)}$ is easily obtained for $t=0$ from (\ref{linear model}), it is intractable for general $t>0$. To address this challenge, most diffusion and flow-based methods adopt the diffusion posterior sampling (DPS) framework \cite{chung2022diffusion} which leverages the Tweedie’s formula \cite{robbins1992empirical} to obtain a posterior estimate of $\bf{x}_0$. While DPS and its variants achieve excellent reconstruction performances, they suffer from  a big disadvantage that their inference speed is very slow due to the time-consuming gradient calculation through back-propagation. 

In this paper, we take an alternative perspective and provide a simple fast solution for solving the noisy linear inverse problems with diffusion or flow-based models by proposing a closed-from approximation to the intractable function \(\nabla_{{\bf{x}}_t} \log{p({\bf{y}}|{\bf{x}}_t)}\). Our primary goal is to reduce the inference time of existing methods with minimal degradation, rather than to compete with state-of-the-art performance. The key observation is that, the noise-perturbed likelihood \(p({\bf{y}}|  {{\bf{x}}}_t) = \int   p({\bf{y}} | {\bf{x}}_0)   p({\bf{x}}_0|  {\bf{x}}_t ) d{\bf{x}}_0\) is unavailable due to the intractability of the  reverse transition probability $p({\bf{x}}_0|  {\bf{x}}_t )$, so that one can  obtain a closed-form approximation of it assuming an \textit{uninformative} prior  $p({\bf{x}}_0)$.  Interestingly, such assumption is asymptotically accurate when the perturbed noise in ${\bf{x}}_t$ negligibly small. The resultant algorithm is denoted as  Diffusion Model based Posterior Sampling (DMPS), one approach that applies to both diffusion and flow-based models. Compared with the  seminal DPS and its variants such as PGDM, thanks to the proposed closed-from approximation, no back-propagation through the pre-trained model is needed, thus significantly reducing the inference time. To verify its efficacy,  a variety of experiments on different linear inverse problems such as  image super-resolution, denoising, deblurring, colorization, are conducted.  Remarkably, as shown in  Figure \ref{fig:flow-results}, in all these tasks, despite its simplicity, DMPS achieves highly competitive or even better reconstruction performances, while the running time is significantly reduced.

\section{Background}
Diffusion models (DM) \cite{song2019generative,ho2020denoising,dhariwal2021diffusion,song2023consistency} and  Flow-based models (such as flow matching, rectified flow) \cite{lipman2022flow,liu2022flow,albergo2023stochastic,ma2024sit} can be seen as a unified class of probabilistic generative models that learn to turning random noise into data samples  ${\bf{x}}_0\sim p({\bf{x}}_0)$. The forward time-dependent process ${\bf{x}}_0\to {\bf{x}}_1\to \cdots \to {\bf{x}}_T$ can be described as follows:
\begin{align}
{\bf{x}}_t = a_t {\bf{x}}_0 + b_t {\bf{\epsilon}},
\label{eq-forward}
\end{align}
where $a_t$ is a decreasing function of $t$, $b_t$ is an increasing function of $t$, and  ${\bf{\epsilon}} \sim \mathcal{N}(\bf{0}, \bf{I})$ is an i.i.d. standard Gaussian noise. Equivalently, the forward process (\ref{eq-forward}) is modeled as
\begin{align}
    p({\bf{x}}_t|{\bf{x}}_0)= \mathcal{N}({\bf{x}}_t;a_t{\bf{x}}_0, b_t^2{\bf{I}}). \label{forward-process}
\end{align}
Both diffusion models and flow-based models aim to reverse the forward process (\ref{eq-forward}) and generate new samples from a distribution that approximates the target data distribution $p({\bf{x}}_0)$. 

\noindent {\bf{Diffusion Models}}:  Diffusion models reverse the forward process (\ref{eq-forward}) by performing a denoising task for each step, i.e., predicting the noise $\epsilon$ from ${\bf{x}}_t$. In the  seminal work of  DDPM \cite{ho2020denoising}, $a_t=\sqrt{{\bar{\alpha}}_t},\; b^2_t=1-\bar{\alpha}_t$, where ${\bar{\alpha}}_t=\prod_{i=1}^t \alpha_i$,  $\alpha_t = 1-\beta_t$, and $0<\beta_1<\beta_1< \cdots < \beta_T<1$ \cite{ho2020denoising}. Denote ${\rm{s}}_{\boldsymbol{\theta}}({\bf{x}}_{t},t)$ as the noise approximator from ${\bf{x}}_t$, one can generate samples following the estimated reverse process  \cite{ho2020denoising} as
\begin{align}
    {\bf{x}}_{t-1} = \frac{1}{\sqrt{\alpha_t}} \big( {\bf{x}}_{t} - \frac{1-\alpha_t}{\sqrt{1-\bar{\alpha}}_t} {\rm{s}}_{\boldsymbol{\theta}}({\bf{x}}_{t},t)\big) + {\beta}_t {\bf{z}}_t, \label{reverse-process}
\end{align}
where ${\bf{z}}_t \sim \mathcal{N}(\bf{0}, \bf{I})$ is an i.i.d. standard Gaussian noise. Note that in the variant ADM in \cite{dhariwal2021diffusion}, the reverse noise variance $\beta_t$ is learned as  $\{\tilde{\sigma}_t\}_{t=1}^T$, which further improves the performances of DDPM. 

Diffusion models are also known as score-based generated models since the denoising process is equivalent to approximating the score function  $\nabla_{{\bf{x}}_t} \log{{{p}}({\bf{x}}_t})$ \cite{song2019generative,song2020denoising}. For example, for DDPM, there is a one-to-one mapping between  ${\rm{s}}_{\boldsymbol{\theta}}({\bf{x}}_{t},t)$  and  $\nabla_{{\bf{x}}_t} \log{{{p}}({\bf{x}}_t})$  
\begin{align}
  \nabla_{{\bf{x}}_t} \log{{{p}}({\bf{x}}_t}) = -\frac{1}{\sqrt{1-\bar{\alpha}}_t} {\rm{s}}_{\boldsymbol{\theta}}({\bf{x}}_{t},t). \label{score-relation}
\end{align}

\noindent {\bf{Flow-based Models}}: Flow-based models can be viewed as a generalization of diffusion models \cite{lipman2022flow,liu2022flow,albergo2023stochastic,ma2024sit}, which introduce a probability ODE with a velocity field \cite{lipman2022flow,ma2024sit}：
\begin{equation}
\dot{\mathbf{x}}_t = \mathbf{v}(\mathbf{x}_t, t),
\label{eq-ODE-flow}
\end{equation}
where $\mathbf{v}(\mathbf{x}, t)$ can be obtained as the conditional expectation $\mathbf{v}(\mathbf{x}, t)=\mathbb{E}[\dot{\mathbf{x}}_t | \mathbf{x}_t = \mathbf{x}]$. Flow-based models solve the probability ODE (\ref{eq-ODE-flow}) backwards by learning the velocity field $\mathbf{v}(\mathbf{x}, t)$ using a neural network ${\rm{v}}_{\boldsymbol{\theta}}({\bf{x}},t)$, and a first-order ODE solver can be realized as follows:
\begin{equation}
\mathbf{x}_{t-1} = \mathbf{x}_{t} -{\rm{v}}_{\boldsymbol{\theta}}({\bf{x}}_{t},t)\Delta_t,
\label{eq-ODE-sampler}
\end{equation}
where $\Delta_t$ is the sampling time interval. 
Interestingly, the score function $ \nabla_{{\bf{x}}_t} \log{{{p}}({\bf{x}}_t})$ can also be expressed in terms of the velocity field \cite{ma2024sit}
\begin{equation}
\nabla_{{\bf{x}}_t} \log{{{p}}({\bf{x}}_t}) = b_t^{-1} \frac{a_t {\rm{v}}_{\boldsymbol{\theta}}({\bf{x}}_{t},t) - \dot{a}_t {\mathbf{x}}_t}{\dot{a}_t b_t - a_t \dot{b}_t}.
\label{score-relation-flow}
\end{equation}

\noindent {\bf{Previous Methods with Diffusion and Flow-based Models}}: 
The problem of reconstructing ${\bf{x}}_0$ from noisy $\bf{y}$ in (\ref{linear model}) can be cast as performing \textit{posterior} inference, i.e., 
\begin{align}
   p({\bf{x}}_0| {\bf{y}}) = \frac{p({\bf{x}}_0)p(\bf{y}|{\bf{x}}_0)}{p({\bf{y}})}, \label{posterior-dist}
\end{align}
where $p({\bf{x}}_0 | {\bf{y}})$ is the \textit{posterior} distribution. 
Ideally, one can directly train diffusion or flow-based models using samples from $p({\bf{x}}| {\bf{y}})$. However, such a supervised approach is neither efficient nor flexible and most previous methods adopt an unsupervised approach \cite{jalal2021robust,chung2022diffusion,song2022pseudoinverse,pokle2023training}: given a pre-trained  diffusion model or flow-based model, one treats it as an implicit prior $p({\bf{x}}_0)$ and then performs posterior sampling through a reverse sampling process ${\bf{x}}_T\to \cdots {\bf{x}}_{t}\to {\bf{x}}_{t-1}\to \cdots \to {\bf{x}}_0$. The main challenge is thus how to incorporate information of $\bf{y}$ within such reverse sampling process. Interestingly, while diffusion models and flow-based models admit slightly different forms, there exists a principled way thanks to the simple relation from the Bayes' rule (\ref{posterior-dist}),
\begin{align}
    \nabla_{{\bf{x}}_t} \log{p({\bf{x}}_t| {\bf{y}})} =  \nabla_{{\bf{x}}_t} \log{p({\bf{x}}_t)} +  \nabla_{{\bf{x}}_t} \log{p({\bf{y}}| {\bf{x}}_t}),   \label{bayes-rule-score}
\end{align}
where $p({{\bf{x}}_t}| {\bf{y}})$  is the score of posterior distribution (we call \textit{posterior score}), which is the sum of the  prior score $\nabla_{{\bf{x}}_t} \log{p({\bf{x}}_t)} $, and the likelihood score $\nabla_{{\bf{x}}_t} \log{p({\bf{y}}| {\bf{x}}_t})$. Given a pre-trained  diffusion model or flow-based model, the prior score $\nabla_{{\bf{x}}_t} \log{p({\bf{x}}_t)}$ can be  readily obtained from the pre-trained model outputs thanks to the intrinsic connections (\ref{score-relation}) (\ref{score-relation-flow}). However, while $\nabla_{{\bf{x}}_t} \log{p({\bf{y}}| {\bf{x}}_t})$ can be readily obtained from (\ref{linear model}) when $t=0$, it becomes intractable in the general case for $t>0$ \cite{chung2022diffusion}. To see this,  one can equivalently write ${p({\bf{y}}| {\bf{x}}_t})$ as 
\begin{align}
   p({\bf{y}} | {{\bf{x}}}_t) = \int   p({\bf{y}} | {\bf{x}}_0)   p({\bf{x}}_0 | {\bf{x}}_t ) d{\bf{x}}_0,\label{eq:rigorous-likelihood-def}
\end{align}
where from the Bayes' rule,
\begin{align}
   p({\bf{x}}_0 | {\bf{x}}_t )= \frac{p({\bf{x}}_t  | {\bf{x}}_0) p({\bf{x}}_0) }{\int p({\bf{x}}_t  | {\bf{x}}_0) p({\bf{x}}_0)  d{\bf{x}}_0 }. \label{eq:pdf_condition_x}
\end{align}
For both diffusion and flow-based models, although the forward transition probability $p({\bf{x}}_t  | {\bf{x}}_0)$ is exactly known as (\ref{forward-process}), the reverse transition probability $ p({\bf{x}}_0 | {\bf{x}}_t )$ is difficult to obtain. Consequently, the remaining key challenge is the calculation of the \textit{noise-perturbed likelihood score} $\nabla_{{\bf{x}}_t} \log{p({\bf{y}}| {\bf{x}}_t})$. A variety of methods \cite{jalal2021robust,chung2022diffusion,song2022pseudoinverse,pokle2023training} have been proposed to approximate $\nabla_{{\bf{x}}_t} \log{p({\bf{y}}| {\bf{x}}_t})$ (or equivalently ${p({\bf{y}}| {\bf{x}}_t})$) and most of them build on the seminal work DPS \cite{chung2022diffusion} which leverages the Tweedie’s formula to obtain the posterior estimate of ${\bf{x}}_0$ \cite{robbins1992empirical, chung2022diffusion}：
\begin{align}
\hat{\bf{x}}_0({\bf{x}}_t) := \mathbb{E}[{\bf{x}}_0 | {\bf{x}}_t] = \frac{1}{a_t} \left( {\bf{x}}_t + b_t^2 \nabla_{{\bf{x}}_t} \log p_t({\bf{x}}_t) \right),
\label{eq:tweedie-formula}
\end{align}
where $\nabla_{{\bf{x}}_t} \log p_t({\bf{x}}_t)$ is approximated by the neural network as (\ref{score-relation}) and (\ref{score-relation-flow}) for diffusion and flow-based models, respectively. 
In particular, DPS uses a Laplace approximation $p({\bf{y}}| {\bf{x}}_t)\approx p({\bf{y}}| \hat{\bf{x}}_0({\bf{x}}_t))=\mathcal{N}({\bf{A}}\hat{\bf{x}}_0({\bf{x}}_t);\sigma_y^2{\bf{I}})$, while both PGDM \cite{song2022pseudoinverse} and OT-ODE \cite{pokle2023training} use an approximation $p({\bf{y}}| {\bf{x}}_t) \approx \mathcal{N}({\bf{A}}\hat{\bf{x}}_0({\bf{x}}_t);\gamma^2_t {\bf{A}}{\bf{A}}^T+\sigma_y^2{\bf{I}})$, where $\gamma_t$ is a hyper-parameter for the variance term. Nevertheless, while DPS and its variants can achieve excellent reconstruction performances, they suffer from a significant drawback: the inference speed is very slow due to the time-consuming gradient of the pre-trained model output w.r.t.  ${\bf{x}}_t$ in calculating the likelihood $\nabla_{{\bf{x}}_t} \log{p({\bf{y}}| {\bf{x}}_t})$.



\section{Method}
\label{Sec-method}
In this section, we propose a fast and efficient closed-form solution for the intractable likelihood score $\nabla_{{\bf{x}}_t} \log{p({\bf{y}}| {\bf{x}}_t})$, termed as  noise-perturbed pseudo-likelihood score. We first derive the results of $\nabla_{{\bf{x}}_t} \log{p({\bf{y}}| {\bf{x}}_t})$ under the general settings (\ref{eq-forward}-\ref{forward-process}), and then apply our results in diffusion and flow-based models, respectively. 

\subsection{Noise-Perturbed Pseudo-Likelihood Score}
To tackle the intractability difficulty of $\nabla_{{\bf{x}}_t} \log{p({\bf{y}}| {\bf{x}}_t})$, we introduce a simple approximation under the following assumption:  
\begin{assumption} (uninformative prior)
\label{uninformative-assumption}
The prior $p({\bf{x}}_0)$ (\ref{eq:pdf_condition_x}) is \textit{uninformative} (flat) w.r.t. ${\bf{x}}_t$ so that $p({\bf{x}}_0 | {\bf{x}}_t) \propto  p({\bf{x}}_t  | {\bf{x}}_0)$, where $\propto $ denotes equality up to a constant scaling. 
\end{assumption} 
Note that while the uninformative prior assumption appears crude at first sight, it is asymptotically accurate when the perturbed noise in ${\bf{x}}_t$ becomes negligible, as verified in Appendix \ref{appendix:uninformative-prior}.

Under Assumption \ref{uninformative-assumption}, we obtain a simple \textit{closed-form}  approximation of $\nabla_{{\bf{x}}_t} \log{p({\bf{y}}| {\bf{x}}_t})$ called noise-perturbed \textit{pseudo-likelihood score} and denote as $\nabla_{{\bf{x}}_t} \log{{\tilde{p}}({\bf{y}}| {\bf{x}}_t})$, as shown in Theorem \ref{noise-likelohood-score-linear-VPSDE}. 

\begin{theorem} (noise-perturbed pseudo-likelihood score for (\ref{eq-forward}))
\label{noise-likelohood-score-linear-VPSDE}
For the general forward process (\ref{eq-forward}),  under Assumption \ref{uninformative-assumption}, the noise-perturbed likelihood score $\nabla_{{\bf{x}}_t} \log{p({\bf{y}}| {\bf{x}}_t})$ for  $\bf{y} = {\bf{Ax}_0+n}$ in (\ref{linear model}) admits a closed-form 
\begin{align}
    &\nabla_{{\bf{x}}_t} \log{p({\bf{y}}| {\bf{x}}_t}) \simeq \nabla_{{\bf{x}}_t} \log{{\tilde{p}}({\bf{y}}| {\bf{x}}_t}) \nonumber \\
    =&\frac{1}{a_t} {\bf{A}}^T {\Big(\sigma_y^2{\bf{I}}+ \frac{b^2_t}{a^2_t} {\bf{A}}{\bf{A}}^T\Big)^{-1}} \big({\bf{y}} - \frac{1}{a_t} {{\bf{A{\bf{x}}}}_t}\big). \label{eq:likelihood-score-original}
\end{align}
\end{theorem}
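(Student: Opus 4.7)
The plan is to start from the integral definition \eqref{eq:rigorous-likelihood-def}, apply the uninformative prior assumption to replace the intractable reverse transition probability with the (normalized) forward transition probability, recognize the resulting integral as a standard Gaussian convolution, and differentiate the log of the resulting Gaussian to extract the score.

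Concretely, first I would use Assumption \ref{uninformative-assumption} to write $p({\bf{x}}_0\mid {\bf{x}}_t)\propto p({\bf{x}}_t\mid {\bf{x}}_0)$, and then view the forward transition \eqref{forward-process} as a function of ${\bf{x}}_0$. Because $p({\bf{x}}_t\mid {\bf{x}}_0)=\mathcal{N}({\bf{x}}_t;\sqrt{\bar\alpha_t}{\bf{x}}_0,(1-\bar\alpha_t){\bf{I}})$ is a Gaussian in ${\bf{x}}_t$ that, after rescaling, is equivalently a Gaussian in ${\bf{x}}_0$, the assumption yields the pseudo-reverse kernel
\begin{align*}
\tilde p({\bf{x}}_0\mid {\bf{x}}_t)=\mathcal{N}\!\Big({\bf{x}}_0;\tfrac{1}{\sqrt{\bar\alpha_t}}{\bf{x}}_t,\tfrac{1-\bar\alpha_t}{\bar\alpha_t}{\bf{I}}\Big).
\end{align*}
Combining this with the Gaussian likelihood $p({\bf{y}}\mid {\bf{x}}_0)=\mathcal{N}({\bf{y}};{\bf{A}}{\bf{x}}_0,\sigma^2{\bf{I}})$ implied by the measurement model \eqref{linear model}, the integral in \eqref{eq:rigorous-likelihood-def} becomes the marginal of an affine Gaussian model ${\bf{y}}={\bf{A}}{\bf{x}}_0+{\bf{n}}$ with ${\bf{x}}_0\sim \tilde p(\cdot\mid {\bf{x}}_t)$ and independent noise ${\bf{n}}\sim\mathcal{N}({\bf{0}},\sigma^2{\bf{I}})$.

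Invoking the standard rule that an affine transformation of a Gaussian plus independent Gaussian noise remains Gaussian, with mean given by applying the transformation to the mean and covariance given by propagating the input covariance plus the noise covariance, gives directly
\begin{align*}
\tilde p({\bf{y}}\mid {\bf{x}}_t)=\mathcal{N}\!\Big({\bf{y}};\tfrac{1}{\sqrt{\bar\alpha_t}}{\bf{A}}{\bf{x}}_t,\;\sigma^2{\bf{I}}+\tfrac{1-\bar\alpha_t}{\bar\alpha_t}{\bf{A}}{\bf{A}}^T\Big).
\end{align*}
Taking the logarithm, the only ${\bf{x}}_t$-dependent contribution is the quadratic form $-\tfrac12\big({\bf{y}}-\tfrac{1}{\sqrt{\bar\alpha_t}}{\bf{A}}{\bf{x}}_t\big)^T\Sigma^{-1}\big({\bf{y}}-\tfrac{1}{\sqrt{\bar\alpha_t}}{\bf{A}}{\bf{x}}_t\big)$ with $\Sigma=\sigma^2{\bf{I}}+\tfrac{1-\bar\alpha_t}{\bar\alpha_t}{\bf{A}}{\bf{A}}^T$; differentiating with respect to ${\bf{x}}_t$ (using symmetry of $\Sigma^{-1}$) produces the claimed expression \eqref{eq:likelihood-score-original}.

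There is no single "hard step" here: once Assumption \ref{uninformative-assumption} is invoked, the remainder is Gaussian calculus. The closest thing to a subtlety is justifying that the ${\bf{x}}_t$-dependent normalization constants from the pseudo-reverse kernel do not contaminate the score — I would address this by noting that under Assumption \ref{uninformative-assumption} the proportionality in $p({\bf{x}}_0\mid {\bf{x}}_t)\propto p({\bf{x}}_t\mid {\bf{x}}_0)$ is with respect to ${\bf{x}}_0$, so after enforcing the $\int\tilde p({\bf{x}}_0\mid {\bf{x}}_t)d{\bf{x}}_0=1$ condition the normalizer becomes ${\bf{x}}_t$-independent and drops out upon $\nabla_{{\bf{x}}_t}$. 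For completeness I might also remark on the limit ${\bar\alpha}_t\to 1$, where $\Sigma\to\sigma^2{\bf{I}}$ and the expression collapses to the exact $t=0$ likelihood score $\tfrac{1}{\sigma^2}{\bf{A}}^T({\bf{y}}-{\bf{A}}{\bf{x}}_0)$, giving a sanity check consistent with the "uninformative in the small-noise regime" motivation for the assumption.
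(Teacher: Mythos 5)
Your proposal is correct and follows essentially the same route as the paper's proof: both convert the uninformative-prior assumption into the Gaussian pseudo-reverse kernel $\tilde p({\bf{x}}_0\mid {\bf{x}}_t)=\mathcal{N}({\bf{x}}_0;{\bf{x}}_t/\sqrt{\bar\alpha_t},\tfrac{1-\bar\alpha_t}{\bar\alpha_t}{\bf{I}})$, propagate it through the affine measurement model to obtain $\tilde p({\bf{y}}\mid{\bf{x}}_t)=\mathcal{N}({\bf{y}};{\bf{A}}{\bf{x}}_t/\sqrt{\bar\alpha_t},\sigma^2{\bf{I}}+\tfrac{1-\bar\alpha_t}{\bar\alpha_t}{\bf{A}}{\bf{A}}^T)$, and differentiate the resulting log-Gaussian. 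Your added remarks on the normalization constant and the ${\bar\alpha}_t\to 1$ sanity check are sound but not needed beyond what the paper does.
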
 
$\bf{Proof}$. 
From Assumption \ref{uninformative-assumption}, we have $p({\bf{x}}_0 | {\bf{x}}_t) \propto p({\bf{x}}_t | {\bf{x}}_0)$. Recall that for the forward process (\ref{eq-forward}), $p({\bf{x}}_t | {\bf{x}}_0)$ is Gaussian (\ref{eq-forward}). By completing the squares w.r.t. ${\bf{x}}_0$, an approximation for $p({\bf{x}}_0 | {\bf{x}}_t)$ can be derived as follows:
\begin{align}
    p({\bf{x}}_0 | {\bf{x}}_t) \simeq  \mathcal{N}({\bf{x}}_0; \frac{{\bf{x}}_t}{a_t}, \frac{b_t^2}{a_t^2}\bf{I}),
    \label{eq:reverse_prob_approx}
\end{align}
whereby ${\bf{x}}_0$ can be  equivalently written as ${\bf{x}}_0  =  \frac{{\bf{x}}_t}{a_t} + {\frac{b_t}{{a_t}} } {\bf{w}}$, where $\bf{w}\sim \mathcal{N}(\bf{0,I})$. Thus, from (\ref{linear model}), we obtain an alternative representation of $ {\bf{y}}$
\begin{align}
    {\bf{y}} = \frac{{\bf{Ax}}_t}{a_t} + {\frac{b_t}{{a_t}} } {\bf{Aw}} + \bf{n}. \label{linear_model_equiv}
\end{align}
After some simple algebra, the likelihood ${p}({\bf{y}}|{\bf{x}}_t)$ can be approximated as $\tilde{p}({\bf{y}}|{\bf{x}}_t)$ 
\begin{align}
    \tilde{p}({\bf{y}}|{\bf{x}}_t) =  \mathcal{N}({\bf{y}}; \frac{{\bf{Ax}}_t}{a_t},\sigma_y^2{\bf{I}} + \frac{b^2_t}{a_t^2} {{\bf{AA}}^T}), \label{PL-result}
\end{align}
where $\tilde{p}({\bf{y}}|{\bf{x}}_t)$ is used to denote the \textit{pseudo-likelihood} as opposed to the exact ${p}({\bf{y}}|{\bf{x}}_t)$ due to Assumption \ref{uninformative-assumption}. 
Using (\ref{PL-result}), one can readily obtain a closed-form solution for the noise-perturbed pseudo-likelihood score $\nabla_{{{\bf{x}}_t}} \log \tilde{p}({\bf{y}}|{\bf{x}}_t)$ as (\ref{eq:likelihood-score-original}), 
which completes the proof. $\hfill\blacksquare$

As shown in Theorem \ref{noise-likelohood-score-linear-VPSDE}, now we obtain a simple \textit{closed-form} approximation for the intractable likelihood score, which is much easier to implement compared to DPS and its variants. In the special case when  ${\bf{A}}$ itself is row-orthogonal, i.e.,  ${\bf{A}}{\bf{A}}^T$ is diagonal, the matrix inversion is trivial and  (\ref{eq:likelihood-score-original}) simply reduces to 
\begin{align}
   [\nabla_{{\bf{x}}_t} \log{\tilde{p}({\bf{y}}| {\bf{x}}_t})]_m =\frac{{\bf{a}}^T_m\left({\bf{y}} - \frac{1}{a_t}  {\bf{A}}{\bf{x}}_t\right)}{\sigma_y^2 {a_t}  + \frac{b_t^2}{a_t^2} \left\Vert {\bf{a}}_{m}\right\Vert _{2}^{2}}. \label{likelihood-score-linear}
\end{align}
where $[\cdot]_m$ is the $m$-th element and  ${\bf{a}}_{m}$ is the $m$-th row of $\bf{A}$.  For general matrices $\bf{A}$, such an inversion is essential but it can also be efficiently implemented by resorting to singular value decomposition (SVD) of $\bf{A}$, as shown in Theorem \ref{noise-likelohood-score-linear-VPSDE-SVD}.
\begin{corollary} (efficient computation via SVD)
\label{noise-likelohood-score-linear-VPSDE-SVD}
For the general forward process (\ref{eq-forward}), the noise-perturbed pseudo-likelihood score $\nabla_{{\bf{x}}_t} \log{p({\bf{y}}| {\bf{x}}_t})$ in (\ref{eq:likelihood-score-original}) of Theorem \ref{noise-likelohood-score-linear-VPSDE} can be equivalently computed as
\begin{align}
    &\nabla_{{\bf{x}}_t} \log{p({\bf{y}}| {\bf{x}}_t}) \simeq \nabla_{{\bf{x}}_t} \log{{\tilde{p}}({\bf{y}}| {\bf{x}}_t})  \nonumber \\
    =&\frac{1}{a_t} {\bf{V\Sigma}} {\Big(\sigma_y^2{\bf{I}}+ \frac{b_t^2}{a^2_t} {\bf{\Sigma}}^2 \Big)^{-1}} {\bf{U}}^T\big({\bf{y}} - \frac{1}{{a_t}} {\bf{A}} {{{\bf{x}}}_t}\big),\label{eq:likelihood-score-SVD}
\end{align}
where ${\bf{A} = U\Sigma V}^T$ is the SVD of $\bf{A}$ and ${\bf{\Sigma}}^2$ denotes element-wise square of $\bf{\Sigma}$.
\end{corollary}
$\bf{Proof}$. The result is straightforward from Theorem \ref{noise-likelohood-score-linear-VPSDE}. $\hfill\blacksquare$
\begin{remark}
Thanks to SVD, there is no need to compute the matrix inversion in (\ref{eq:likelihood-score-original}) for each $t$. Instead, one simply needs to perform SVD of $\bf{A}$ only once and then compute $\nabla_{{\bf{x}}_t} \log{{\tilde{p}}({\bf{y}}| {\bf{x}}_t})$ by  (\ref{eq:likelihood-score-SVD}), which is quite simple since $\bf{\Sigma}$ is a diagonal matrix.  
\end{remark}

    
            
      
            

\subsection{DMPS: Diffusion Model Based Posterior Sampling}
\label{sec:DMPS}
After obtaining the approximate results of the likelihood score function $\nabla_{{\bf{x}}_t} \log{p({\bf{y}}| {\bf{x}}_t})$, we can easily modify the sampling equations of the original diffusion and flow-based models from a unified Bayesian perspective. Here we introduce a simple yet universal procedure demonstrating how we can achieve this for both diffusion and flow-based models. 

\noindent \textbf{Step 1}: Reformulate the original sampling equations for unconditional generation in the terms of the prior score $\nabla_{{\bf{x}}_t} \log{{{p}}({\bf{x}}_t})$.
This step requires building connections between the generative model (either diffusion or flow-based models) output with the score function $\nabla_{{\bf{x}}_t} \log{{{p}}({\bf{x}}_t})$. For example, given the connections (\ref{score-relation}) (\ref{score-relation-flow}), the original sampling equation  (\ref{reverse-process}) for DDPM and (\ref{eq-ODE-sampler})  for flow-based models can be rewritten using $\nabla_{{\bf{x}}_t} \log{{{p}}({\bf{x}}_t})$ as follows 
\begin{align}
    \text{DDPM:\;\;} {\bf{x}}_{t-1} =& \frac{1}{\sqrt{\alpha_t}} \big( {\bf{x}}_{t} + ({1-\alpha_t})\nabla_{{\bf{x}}_t} \log{{{p}}({\bf{x}}_t})\big) + {\beta}_t {\bf{z}}_t, \label{reverse-process-score-form-ddpm}\\
    \text{Flow-based:\;\;} \mathbf{x}_{t-1} =& \mathbf{x}_{t} - \big( \frac{\dot{a}_t}{a_t}{\bf{x}}_{t} + \frac{b_t(\dot{a}_t b_t - a_t \dot{b}_t)}{a_t}\nabla_{{\bf{x}}_t} \log{{{p}}({\bf{x}}_t})   \big)\Delta_t, 
    \label{reverse-process-score-form}
\end{align}

\begin{align}
    \text{DDPM:\;\;} {\bf{x}}_{t-1} =& \frac{1}{\sqrt{\alpha_t}} \big( {\bf{x}}_{t} + ({1-\alpha_t}) {\textcolor{red}{\nabla_{{\bf{x}}_t} \log{{{p}}({\bf{x}}_t}) }{\textcolor{blue}{+\nabla_{{\bf{x}}_t} \log{p({\bf{y}}| {\bf{x}}_t})}}) \big)} + {\beta}_t {\bf{z}}_t, \label{reverse-process-score-form-ddpm}\\
    \text{Flow-based:\;\;} \mathbf{x}_{t-1} =& \mathbf{x}_{t} - \big( \frac{\dot{a}_t}{a_t}{\bf{x}}_{t} + \frac{b_t(\dot{a}_t b_t - a_t \dot{b}_t)}{a_t} {\textcolor{red}{\nabla_{{\bf{x}}_t} \log{{{p}}({\bf{x}}_t})} } {\textcolor{blue}{+\nabla_{{\bf{x}}_t} \log{p({\bf{y}}| {\bf{x}}_t})}}) \big)\Delta_t, 
\end{align}

\noindent \textbf{Step 2}:  Replace the prior score $\nabla_{{\bf{x}}_t} \log{{{p}}({\bf{x}}_t})$ in the sampling equations obtained in Step 1 with the posterior score  $\nabla_{{\bf{x}}_t} \log{p({\bf{x}}_t| {\bf{y}})}$ as (\ref{bayes-rule-score}). For example, for DDPM and flow-based models, the corresponding sampling equations (\ref{reverse-process-score-form-ddpm}-\ref{reverse-process-score-form}) become
\begin{align}
    \text{DDPM:\;\;}{\bf{x}}_{t-1} =& \frac{1}{\sqrt{\alpha_t}} \big( {\bf{x}}_{t} + ({1-\alpha_t}) (\nabla_{{\bf{x}}_t}\log{{{p}}({\bf{x}}_t}){\textcolor{blue}{+\nabla_{{\bf{x}}_t} \log{p({\bf{y}}| {\bf{x}}_t})}})\big) + {\beta}_t {\bf{z}}_t,\label{reverse-process-post-form-ddpm}\\
    \text{Flow-based:\;\;}  \mathbf{x}_{t-1} =& \mathbf{x}_{t} - \big( \frac{\dot{a}_t}{a_t}{\bf{x}}_{t} + \frac{b_t(\dot{a}_t b_t - a_t \dot{b}_t)}{a_t}(\nabla_{{\bf{x}}_t} \log{{{p}}({\bf{x}}_t}) {\textcolor{blue}{+\nabla_{{\bf{x}}_t} \log{p({\bf{y}}| {\bf{x}}_t})}})  \big)\Delta_t, 
    \label{reverse-process-post-form}
\end{align}

\noindent \textbf{Step 3}: Replace the prior score $\nabla_{{\bf{x}}_t} \log{{{p}}({\bf{x}}_t})$ back in terms of the generative model outputs in the obtained sampling equations in Step 2. Subsequently, taking into account the additional terms due to the addition of likelihood score, we can easily obtain the final posterior sampling equations. 
For example, for DDPM and flow-based models, the corresponding sampling equations (\ref{reverse-process-post-form-ddpm}-\ref{reverse-process-post-form}) finally become
\begin{align}
\text{DDPM:\;\;} \mathbf{x}_{t-1} &= \underbrace{\frac{1}{\sqrt{\alpha_t}} \left( \mathbf{x}_t - \frac{1-\alpha_t}{\sqrt{1-\bar{\alpha}_t}} s_{\theta}(\mathbf{x}_t, t) \right) + \beta_t \mathbf{z}_t}_{\text{original sampling equation}} + \textcolor{blue}{\underbrace{\frac{1-\alpha_t}{\sqrt{\alpha_t}} \nabla_{\mathbf{x}_t} \log p(\mathbf{y}|\mathbf{x}_t)}_{\textcolor{blue}{\text{additional part}}}}, \\
\text{Flow-based:\;\;}  \mathbf{x}_{t-1} &= \underbrace{\mathbf{x}_t - \mathbf{v}_{\theta}(\mathbf{x}_t, t) \Delta_t}_{\text{original sampling equation}} - \textcolor{blue}{\underbrace{\frac{b_t (\dot{a}_t b_t - a_t \dot{b}_t)}{a_t} \nabla_{\mathbf{x}_t} \log p(\mathbf{y}|\mathbf{x}_t) \Delta_t}_{\textcolor{blue}{\text{additional part}}}},
\end{align}
where the blue part is the addition terms required to incorporate into the original sampling equations to enable posterior sampling from  $p({\bf{x}}_0 |{\bf{y}} )$ given $\bf{y}$.  

Following the above procedures, we obtain the resultant algorithms for DDPM and flow-based models, as  shown in Algorithm \ref{alg: posterior-sampling-algorithm} and Algorithm \ref{alg: flow-based}, respectively. For brevity, we call both algorithms as Diffusion Model  based Posterior Sampling (dubbed DMPS) since flow-based models can be viewed as a generalization of diffusion models \cite{albergo2023stochastic}.  In the DDPM version, the reverse diffusion variance $\{\tilde{\sigma}_t\}_{t=1}^T$ is learned as the ADM in  \cite{dhariwal2021diffusion}. Both the two versions of DMPS algorithms can be  easily implemented on top of the existing  code just by adding two additional simple lines (lines 4-5 in Algorithm \ref{alg: posterior-sampling-algorithm}, lines 8-9 in Algorithm \ref{alg: flow-based}) of codes.

\begin{figure}[t!]
\centering
\begin{minipage}[t]{0.45\textwidth}
    \begin{algorithm}[H]
    \caption{DMPS (DDPM version)}
    \label{alg: posterior-sampling-algorithm}
    \DontPrintSemicolon
      \KwInput{$\bf{y,A}$, $\sigma_y^2$, $\{\tilde{\sigma}_t\}_{t=1}^T,\lambda$}
      \KwInitialize{${\bf{x}}_T\sim \mathcal{N}(\bf{0}, \bf{I})$, ${\bf{A} = U\Sigma V}^T$}
      \For{$t=T$ {\bfseries to} $1$}{
            Draw ${\bf{z}}_t \sim \mathcal{N}(\bf{0}, \bf{I})$ 
            
            ${\bf{x}}_{t-1} = \frac{1}{\sqrt{\alpha_t}} \big( {\bf{x}}_{t} - \frac{1-\alpha_t}{\sqrt{1-\bar{\alpha}}_t} {\rm{s}}_{\boldsymbol{\theta}}({\bf{x}}_{t},t)\big) + \tilde{\sigma}_t {\bf{z}}_t$  
      
            {{$\nabla_{{\bf{x}}_t} \log{\tilde{p}({\bf{y}}| {\bf{x}}_t}) \\=
            \frac{1}{\sqrt{{\bar{\alpha}}_t}} {\bf{V\Sigma}} {\Big(\sigma_y^2{\bf{I}}+ \frac{1-{\bar{\alpha}}_t}{{\bar{\alpha}}_t} {\bf{\Sigma}}^2 \Big)^{-1}} {\bf{U}}^T\big({\bf{y}} - \frac{1}{\sqrt{{\bar{\alpha}}_t}} {\bf{A}} {{{\bf{x}}}_t}\big)$} }     
            
            ${{\bf{x}}_{t-1} = {\bf{x}}_{t-1} + \lambda \frac{1-\alpha_t}{\sqrt{\alpha_t}} \nabla_{{\bf{x}}_t} \log{\tilde{p}({\bf{y}}| {\bf{x}}_t})}$
       }
    \KwOutput{${\bf{{x}}}_0$}
    \end{algorithm}
\end{minipage}
\hfill
\begin{minipage}[t]{0.45\textwidth}
    \begin{algorithm}[H]
   \caption{DMPS (flow-based version)}
    \label{alg: flow-based}
      \DontPrintSemicolon
      \KwInput{$\bf{y,A}$, $\sigma_y^2$, $\Delta_t=1/T$, $\lambda$}
      \KwInitialize{${\bf{x}}_T\sim \mathcal{N}(\bf{0}, \bf{I})$, ${\bf{A} = U\Sigma V}^T$}
      \For{$t=T$ {\bfseries to} $1$}{
            \vspace{\baselineskip}
            $\mathbf{x}_{t-1} = \mathbf{x}_{t} - {\rm{v}}_{\boldsymbol{\theta}}({\bf{x}}_{t},t)\Delta_t $  
      
            {{$\nabla_{{\bf{x}}_t} \log{\tilde{p}({\bf{y}}| {\bf{x}}_t}) \\=
            \frac{1}{a_t} {\bf{V\Sigma}} {\Big(\sigma_y^2{\bf{I}}+ \frac{b_t^2}{a^2_t} {\bf{\Sigma}}^2 \Big)^{-1}} {\bf{U}}^T\big({\bf{y}} - \frac{1}{\sqrt{{\bar{\alpha}}_t}} {\bf{A}} {{{\bf{x}}}_t}\big)$} }     
            
            ${{\bf{x}}_{t-1} = {\bf{x}}_{t-1} - \lambda \frac{b_t(\dot{a}_t b_t - a_t \dot{b}_t)}{a_t} \log{\tilde{p}({\bf{y}}| {\bf{x}}_t})\Delta_t }$
       }
    \KwOutput{${\bf{{x}}}_0$}
    \end{algorithm}
\end{minipage}
\end{figure}

\noindent {\textbf{Remark}}:  A scaling parameter $\lambda>0$ is introduced in both algorithms, similar to classifier guidance diffusion sampling \cite{dhariwal2021diffusion}. Empirically it is found that the performances are robust to different choices of $\lambda$  as shown in the Appendix \ref{appendix:effect-lambda}, and we fix $\lambda=1.75$ for DMPS (DDPM version) and  $\lambda=2.0$ for DMPS (flow-based version) in all the experiments.

\section{Experiments}
In this section, we conduct experiments on a variety of noisy linear inverse problems to demonstrate the efficacy of the proposed DMPS method, for both diffusion models and flow-based models. The code is available at \href{https://github.com/mengxiangming/dmps}{https://github.com/mengxiangming/dmps}.

\subsection{Experimental Setup}
\textbf{Tasks}:  The tasks we consider include image super-resolution (SR), denoising, deblurring, as well as image colorization. In particular: (a) for image super-resolution (SR), the bicubic downsampling is performed as \cite{chung2022diffusion}; (b) for deblurring,  uniform blur of size $9\times9$ \cite{kawar2022denoising} (for DDPM) and Gaussian blur (for flow-based) are used; (c) for colorization, the grayscale image is obtained by averaging the red, green, and blue channels of each pixel \cite{kawar2022denoising}.  For all tasks, additive Gaussian noise $\bf{n}$ with $\sigma=0.05$ is added except the denoising task where a larger noise $\bf{n}$ with $\sigma=0.5$ is added. 

\noindent \textbf{Dataset}: Both  FFHQ \cite{karras2019style} and CelebA-HQ \cite{karras2018progressive} are considered. More results on FFHQ-cat, LSUN-bedroom, and AFHQ-cat can be found in the Appendix \ref{append-more-results}. 


\noindent \textbf{Pre-trained Diffusion Models}: For a fair comparison, we use the same  pre-trained model for all the different methods evaluated. For diffusion models, the pre-trained ADM model \cite{choi2021ilvr} is used, available in \href{https://drive.google.com/drive/folders/1jElnRoFv7b31fG0v6pTSQkelbSX3xGZh}{DDPM-checkpoint}. For flow-based models, we use the pre-trained rectified flow model \cite{liu2022flow}, which is available in 
\href{https://drive.google.com/file/d/1ryhuJGz75S35GEdWDLiq4XFrsbwPdHnF/edit}{flow-checkpoint}, and the forward process (\ref{forward-process}) is specified as  $a_t = 1-t$, $b_t=t$.

\noindent \textbf{Comparison Methods}:  We compare DMPS with the following  methods: DPS \cite{chung2022diffusion}, PGDM \cite{song2022pseudoinverse}, and the OT-ODE method \cite{pokle2023training}. Actually, OT-ODE can be viewed as the flow-based version of PGDM.  For DPS, we also compare two versions: one is the original DDPM version, the other is the flow-based version obtained following the procedures described in Section \ref{sec:DMPS}. 

\noindent \textbf{Metrics}: Three widely used metrics are considered, including the standard distortion metric peak signal noise ratio (PSNR) (dB), as well as  two popular perceptual metrics: structural similarity index measure (SSIM) \cite{wang2004image} and Learned Perceptual Image Patch Similarity (LPIPS) \cite{zhang2018unreasonable}.     

\noindent \textbf{GPU}: All results are run on a single  NVIDIA  Tesla V100.

\begin{table*}[h!]
\scriptsize 
\centering
\setlength{\tabcolsep}{-1pt}
\begin{adjustbox}{width=\linewidth,center}
\begin{tabular*}{\linewidth}{
  @{\extracolsep{\fill}}
  ccccccccccccc
}
\toprule
& \mc{3}{c}{\textbf{super-resolution}} & \mc{3}{c}{\textbf{deblur}} & \mc{3}{c}{\textbf{colorization}}  & \mc{3}{c}{\textbf{denoising}} \\
\cmidrule{2-4} \cmidrule{5-7} \cmidrule{8-10} \cmidrule{11-13} 
\textbf{Method} & {PSNR $\uparrow$} & {SSIM $\uparrow$} & {LPIPS $\downarrow$} 
& {PSNR $\uparrow$} & {SSIM $\uparrow$} & {LPIPS $\downarrow$} 
& {PSNR $\uparrow$} & {SSIM $\uparrow$} & {LPIPS $\downarrow$} 
& {PSNR $\uparrow$} & {SSIM $\uparrow$} & {LPIPS $\downarrow$} \\

\toprule
DMPS {(DDPM, ours)} &  \textbf{27.63} &	\textbf{0.8450} &	\textbf{0.2071}	& 
\textbf{27.26} &	{0.7644} &	{0.2222} &
\textbf{21.09} &	\textbf{0.9592}	& \textbf{0.2738} &	
\textbf{27.81} &	{0.8777}	& {0.2435} 
 \\

\midrule
DPS (DDPM) &  26.78 &	0.8391 &	0.2329 & 26.50 &	\textbf{0.8151} &	0.2248 &	11.53 &	0.7923 &	0.5755 & 27.22 &	\textbf{0.8969} &	{0.2428} 	 \\

PGDM &  {27.60} &	0.8345 &	0.2077 & 26.65 &	0.7458 &	\textbf{0.2196}  &	{12.15} &	0.8920 &	0.3969 	& 27.60 &	{0.8682} &	\textbf{0.2425} \\


\bottomrule
\end{tabular*}
\end{adjustbox}
\caption{Quantitative comparison (PSNR (dB), SSIM, LPIPS) of different algorithms for different tasks on FFHQ $256\times 256$-1k validation dataset. The same pre-trained DDPM model is used. }
\label{table:ffhq}
\end{table*}

\begin{table*}[h!]
\scriptsize 
\centering
\setlength{\tabcolsep}{-1pt}
\begin{adjustbox}{width=\linewidth,center}
\begin{tabular*}{\linewidth}{
  @{\extracolsep{\fill}}
  ccccccccccccc
}
\toprule
& \mc{3}{c}{\textbf{super-resolution}} & \mc{3}{c}{\textbf{deblur}} & \mc{3}{c}{\textbf{colorization}}  & \mc{3}{c}{\textbf{denoising}} \\
\cmidrule{2-4} \cmidrule{5-7} \cmidrule{8-10} \cmidrule{11-13} 
\textbf{Method} & {PSNR $\uparrow$} & {SSIM $\uparrow$} & {LPIPS $\downarrow$} 
& {PSNR $\uparrow$} & {SSIM $\uparrow$} & {LPIPS $\downarrow$} 
& {PSNR $\uparrow$} & {SSIM $\uparrow$} & {LPIPS $\downarrow$} 
& {PSNR $\uparrow$} & {SSIM $\uparrow$} & {LPIPS $\downarrow$} \\

\toprule
DMPS (Flow-based, ours) & \textbf{28.29} & \textbf{0.8011} & 0.2329 & \textbf{26.21} & \textbf{0.7235} & {0.2637} & \textbf{23.31} & \textbf{0.8861} & \textbf{0.2901} & \textbf{29.04} & \textbf{0.8166} & \textbf{0.2821} \\
\midrule
DPS (Flow-based) & 28.05 & 0.7754 & \textbf{0.2266} & 22.64 & 0.5787 & 0.3403 & 20.92 & 0.8061 & 0.3335 & 27.93 & 0.7465 & 0.2882 \\

OT-ODE & 27.71 & 0.7657 & 0.2302 & 25.84 & 0.7084 & \textbf{0.2573} & 21.67 & 0.8696 & 0.3094 & 22.76 & 0.3820 & 0.4778 \\

\bottomrule
\end{tabular*}
\end{adjustbox}
\caption{Quantitative comparison (PSNR (dB), SSIM, LPIPS) of different algorithms for different tasks on the validation set of CelebA-HQ. The same pre-trained flow-based model is used.}
\label{table:celeba-hq}
\end{table*}






\begin{figure*}[!h]
\centering
\subfigure[Super-resolution (SR) ($\times 4$)]{
    \begin{minipage}[b]{0.48\textwidth}
    \includegraphics[width=\textwidth]{./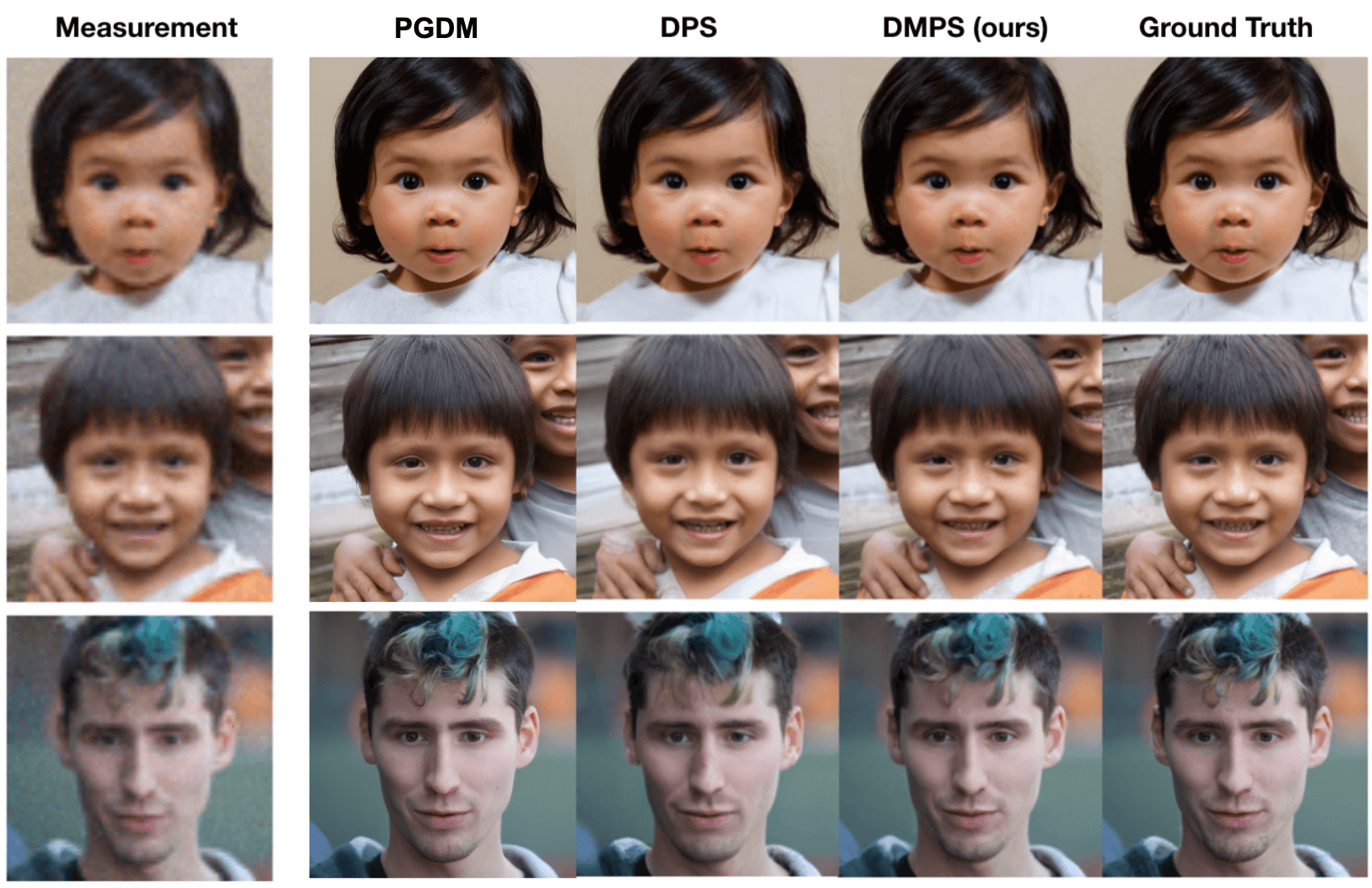}
    \end{minipage}
}
\subfigure[Denoising ($\sigma=0.5$)]{
  \begin{minipage}[b]{0.48\textwidth}
    \includegraphics[width=\textwidth]{./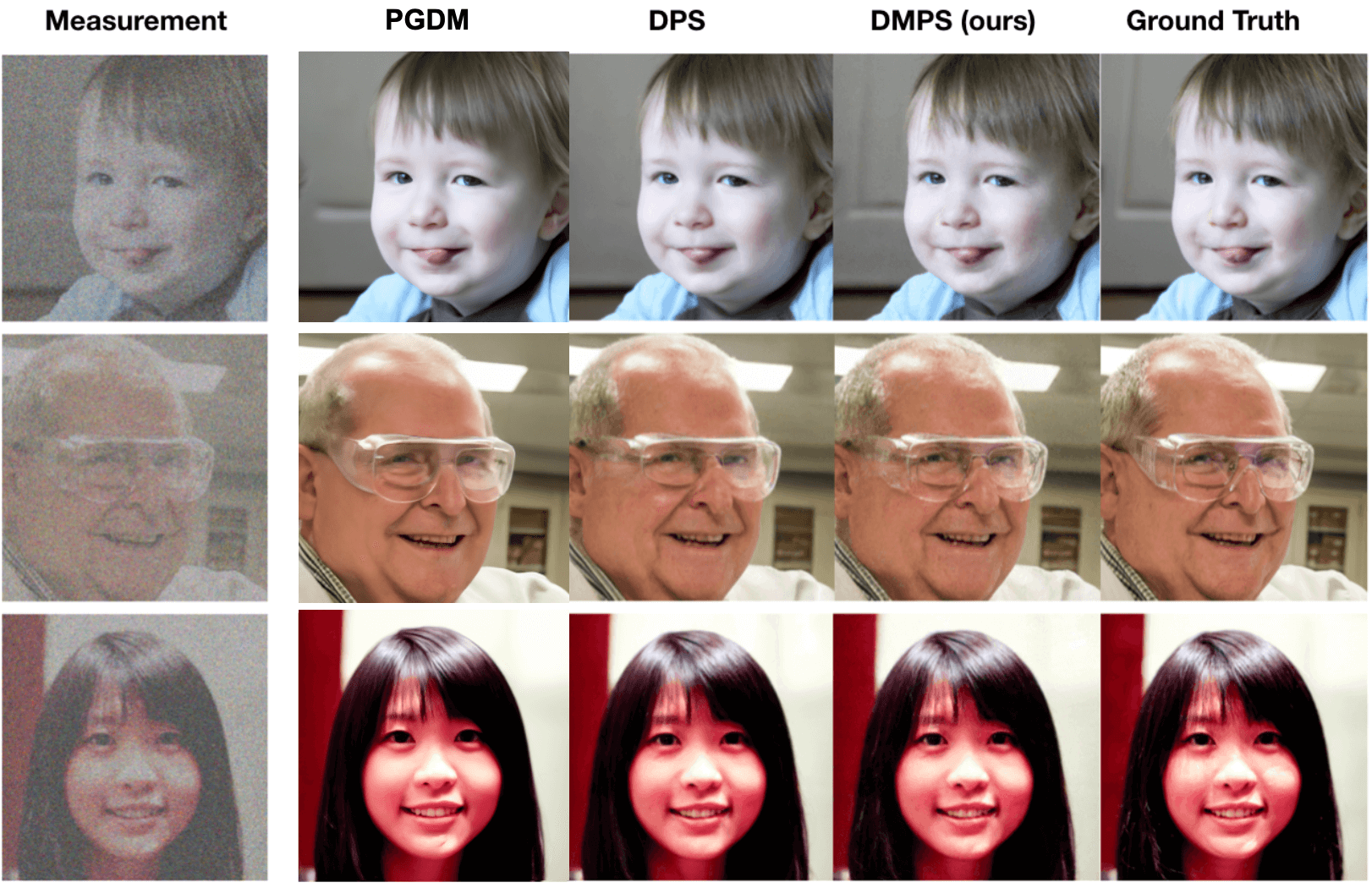}
  \end{minipage}
}
\subfigure[colorization]{
    \begin{minipage}[b]{0.48\textwidth}
    \includegraphics[width=\textwidth]{./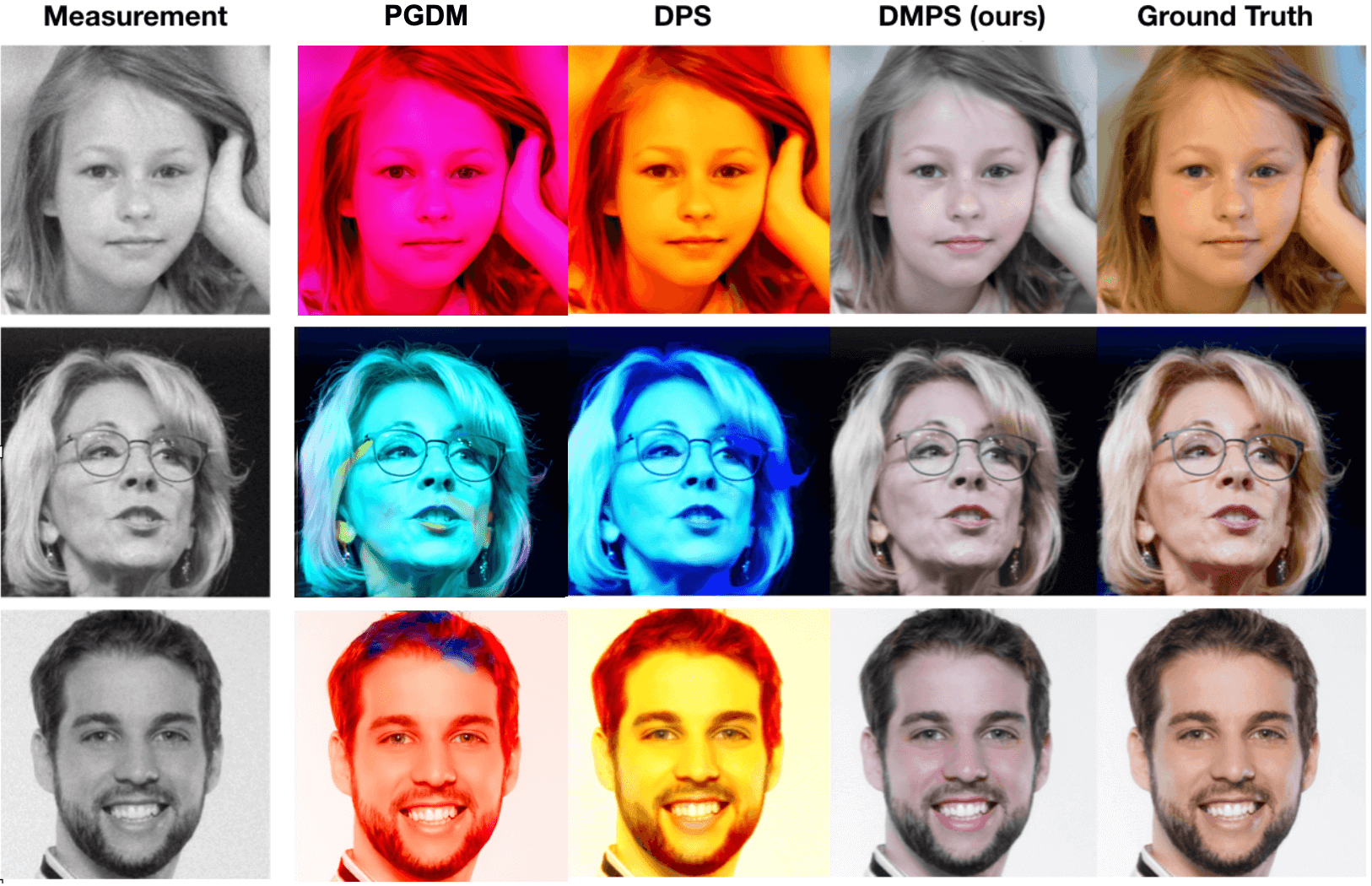}
    \end{minipage}
}
\subfigure[Deblurring (uniform)]{
  \begin{minipage}[b]{0.48\textwidth}
    \includegraphics[width=\textwidth]{./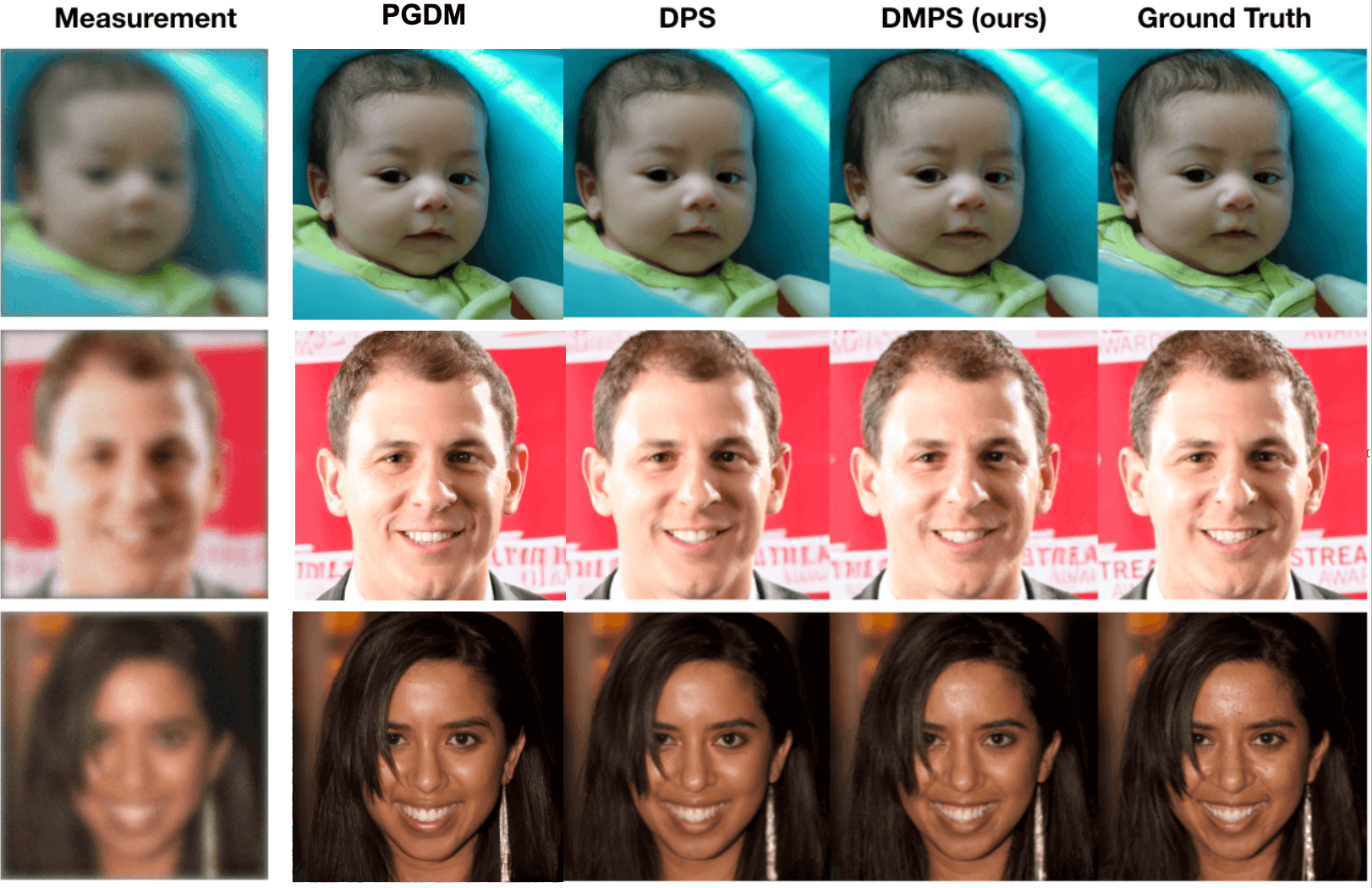}
  \end{minipage}
}
\caption{\small{Typical results on FFHQ $256\times 256$ 1k validation set for  different noisy linear inverse problems. All the algorithms are based on the same DDPM model. In all cases, the measurements are with Gaussian noise $\sigma=0.05$, except denoising where $\sigma=0.5$. }}
\label{fig-ffhq}
\end{figure*}

\noindent \textbf{Results}:  
First is a quantitative comparison in terms of different metrics.  Table \ref{table:ffhq} shows the quantitative reconstruction  performances of different algorithms on diffusion models on the FFHQ dataset, and  Table \ref{table:celeba-hq} shows the quantitative reconstruction performances of different algorithms on flow-based models with the CelebA-HQ dataset. As shown in Table \ref{table:ffhq} and Table \ref{table:celeba-hq}, despite its simplicity, the proposed DMPS achieves highly competitive or even better performances than the baselines. 

Second, we make a qualitative comparison between different algorithms for different tasks. Figure  \ref{fig-ffhq} shows the typical reconstructed images of different algorithms on diffusion models on the FFHQ dataset; Figure \ref{fig:flow-results} shows the quantitative reconstruction performances of different algorithms on flow-based models with the CelebA-HQ dataset. As shown in Figure  \ref{fig-ffhq} and Figure \ref{fig:flow-results}, in all tasks, DMPS produces high-quality realistic images which match details of the  ground-truth more closely. For example, for super-resolution, please have a look at the ear stud in the first row of Figure\ref{fig-ffhq} (a), the hand on the shoulder in the second row of Figure \ref{fig-ffhq} (a), and the background in the second  row of Figure  \ref{fig-ffhq} (a); for denoising, please see the background door in the first row of Figure \ref{fig-ffhq} (b), and the collar in the second row of Figure \ref{fig-ffhq} (b), and the last row of  Figure \ref{fig:flow-results}; for colorization, DPS tends to produce over-bright images in colorization while DMPS produces more natural colored images, as shown in Figure \ref{fig-ffhq} (c) and Figure \ref{fig:flow-results}, etc. 

\begin{figure}[t]
\centering
\begin{minipage}{0.45\textwidth}
  \centering
  \begin{tabular}{@{}lc@{}}
    \toprule
    Method &  Inference Time [s] \\
    \midrule
    DMPS {(DDPM, ours)} & \textbf{67.02} \\
    \midrule
    DPS (DDPM)  & 194.42 \\
    PGDM  & 182.35 \\
    \bottomrule
  \end{tabular}
\end{minipage}
\hfill
\begin{minipage}{0.45\textwidth}
  \centering
  \begin{tabular}{@{}lc@{}}
    \toprule
    Method &  Inference Time [s] \\
    \midrule
    DMPS {(flow-based, ours)} & \textbf{4.45} \\
    \midrule
    DPS (flow-based)  & 8.04 \\
    OT-DOE  & 6.44 \\
    \bottomrule
  \end{tabular}
\end{minipage}
\caption{Comparison of the inference time for different methods. Left: Results on DDPM models when NFE=1000, obtained on the SR task for  FFHQ $256\times 256$. Right: Results on flow-based models when NFE=50, obtained on the SR task for  CelebA-HQ $256\times 256$.  }
\label{fig:comparison-time}
\end{figure}

Finally, we evaluate the inference time of different algorithms, which is one of the key motivation of this paper. Here we would like to emphasize again  that the main goal of this paper is not to compete with state-of-the-art performance but rather to provide a fast method. For fair of comparison, for both diffusion and flow-based models, different algorithms uses the same pre-trained model.  Figure \ref{fig:comparison-time} show the the average running time for different algorithms: Left table shows the results under diffusion models when the number of function evaluation (NFE) is NFE = 1000; Right Table shows the results of different algorithms under flow-based models when NFE = 50. It can be seen that, in both versions, the inference time of the proposed DMPS method is significantly less than other methods, which is much appealing in practical applications.

\section{Discussion and Conclusion}
\label{sec:Conclusion}
In this paper, we propose fast and effective closed-form approximation of the intractable noise-perturbed likelihood score, leading to the Diffusion Model based Posterior Sampling (dubbed DMPS). 
For both diffusion and flow-based models, we evaluate the effectiveness of DMPS on multiple linear inverse problems including image super-resolution, denoising, deblurring, colorization. Despite its simplicity, DMPS achieves highly competitive or even better reconstruction performances, while its inference time of DMPS is significantly faster. 

\noindent\textbf{Limitations $\&$ Future Work}: While DMPS apparently reduces the inference time and achieves competitive reconstruction performances, it still suffers several limitations. First, although memory efficient SVD exists for most practical matrices  $\bf{A}$ of practical interests \cite{kawar2022denoising}, the SVD operation in DMPS still has some implementation difficulty for more general matrices $\bf{A}$. Second, it can not be directly applied to the popular latent diffusion models such as stable diffusion \cite{rombach2022high}, which is widely used due to its efficiency. Addressing these limitations are left as  future work.

\section*{Acknowledgements}
X. Meng would like to sincerely thank Yichi Zhang and Jim Yici Yan from UIUC for helpful discussions. 
This work was supported by NSFC No. 62306277,  and the Fundamental Research Funds for the Zhejiang Provincial Universities Grant No. K20240090, The Japan Science and Technology Agency (JST) Grant No. JPMJCR1912, and The Japan Society for the Promotion of Science (JSPS) Grant No. JP22H05117.

\bibliography{acml24}

\begin{thebibliography}{40}
\providecommand{\natexlab}[1]{#1}
\providecommand{\url}[1]{\texttt{#1}}
\expandafter\ifx\csname urlstyle\endcsname\relax
  \providecommand{\doi}[1]{doi: #1}\else
  \providecommand{\doi}{doi: \begingroup \urlstyle{rm}\Url}\fi

\bibitem[Albergo et~al.(2023)Albergo, Boffi, and Vanden-Eijnden]{albergo2023stochastic}
Michael~S Albergo, Nicholas~M Boffi, and Eric Vanden-Eijnden.
\newblock Stochastic interpolants: A unifying framework for flows and diffusions.
\newblock \emph{arXiv preprint arXiv:2303.08797}, 2023.

\bibitem[Bertalmio et~al.(2000)Bertalmio, Sapiro, Caselles, and Ballester]{bertalmio2000image}
Marcelo Bertalmio, Guillermo Sapiro, Vincent Caselles, and Coloma Ballester.
\newblock Image inpainting.
\newblock In \emph{Proceedings of the 27th annual conference on Computer graphics and interactive techniques}, pages 417--424, 2000.

\bibitem[Buades et~al.(2005)Buades, Coll, and Morel]{buades2005review}
Antoni Buades, Bartomeu Coll, and Jean-Michel Morel.
\newblock A review of image denoising algorithms, with a new one.
\newblock \emph{Multiscale modeling \& simulation}, 4\penalty0 (2):\penalty0 490--530, 2005.

\bibitem[Cand{\`e}s and Wakin(2008)]{candes2008introduction}
Emmanuel~J Cand{\`e}s and Michael~B Wakin.
\newblock An introduction to compressive sampling.
\newblock \emph{IEEE signal processing magazine}, 25\penalty0 (2):\penalty0 21--30, 2008.

\bibitem[Cand{\`e}s et~al.(2006)Cand{\`e}s, Romberg, and Tao]{candes2006robust}
Emmanuel~J Cand{\`e}s, Justin Romberg, and Terence Tao.
\newblock Robust uncertainty principles: Exact signal reconstruction from highly incomplete frequency information.
\newblock \emph{IEEE Transactions on information theory}, 52\penalty0 (2):\penalty0 489--509, 2006.

\bibitem[Choi et~al.(2021)Choi, Kim, Jeong, Gwon, and Yoon]{choi2021ilvr}
Jooyoung Choi, Sungwon Kim, Yonghyun Jeong, Youngjune Gwon, and Sungroh Yoon.
\newblock Ilvr: Conditioning method for denoising diffusion probabilistic models.
\newblock In \emph{2021 IEEE/CVF International Conference on Computer Vision (ICCV)}, pages 14347--14356. IEEE, 2021.

\bibitem[Chung et~al.(2022{\natexlab{a}})Chung, Kim, Mccann, Klasky, and Ye]{chung2022diffusion}
Hyungjin Chung, Jeongsol Kim, Michael~T Mccann, Marc~L Klasky, and Jong~Chul Ye.
\newblock Diffusion posterior sampling for general noisy inverse problems.
\newblock \emph{arXiv preprint arXiv:2209.14687}, 2022{\natexlab{a}}.

\bibitem[Chung et~al.(2022{\natexlab{b}})Chung, Sim, Ryu, and Ye]{chung2022improving}
Hyungjin Chung, Byeongsu Sim, Dohoon Ryu, and Jong~Chul Ye.
\newblock Improving diffusion models for inverse problems using manifold constraints.
\newblock \emph{arXiv preprint arXiv:2206.00941}, 2022{\natexlab{b}}.

\bibitem[Dhariwal and Nichol(2021)]{dhariwal2021diffusion}
Prafulla Dhariwal and Alexander Nichol.
\newblock Diffusion models beat gans on image synthesis.
\newblock \emph{Advances in Neural Information Processing Systems}, 34:\penalty0 8780--8794, 2021.

\bibitem[Fazel et~al.(2008)Fazel, Candes, Recht, and Parrilo]{fazel2008compressed}
Maryam Fazel, E~Candes, Benjamin Recht, and P~Parrilo.
\newblock Compressed sensing and robust recovery of low rank matrices.
\newblock In \emph{2008 42nd Asilomar Conference on Signals, Systems and Computers}, pages 1043--1047. IEEE, 2008.

\bibitem[Ho et~al.(2020)Ho, Jain, and Abbeel]{ho2020denoising}
Jonathan Ho, Ajay Jain, and Pieter Abbeel.
\newblock Denoising diffusion probabilistic models.
\newblock \emph{Advances in Neural Information Processing Systems}, 33:\penalty0 6840--6851, 2020.

\bibitem[Jalal et~al.(2021{\natexlab{a}})Jalal, Arvinte, Daras, Price, Dimakis, and Tamir]{jalal2021robust}
Ajil Jalal, Marius Arvinte, Giannis Daras, Eric Price, Alexandros~G Dimakis, and Jon Tamir.
\newblock Robust compressed sensing mri with deep generative priors.
\newblock \emph{Advances in Neural Information Processing Systems}, 34:\penalty0 14938--14954, 2021{\natexlab{a}}.

\bibitem[Jalal et~al.(2021{\natexlab{b}})Jalal, Karmalkar, Dimakis, and Price]{jalal2021instance}
Ajil Jalal, Sushrut Karmalkar, Alex Dimakis, and Eric Price.
\newblock Instance-optimal compressed sensing via posterior sampling.
\newblock In \emph{International Conference on Machine Learning}, pages 4709--4720. PMLR, 2021{\natexlab{b}}.

\bibitem[Kadkhodaie and Simoncelli(2021)]{kadkhodaie2021stochastic}
Zahra Kadkhodaie and Eero Simoncelli.
\newblock Stochastic solutions for linear inverse problems using the prior implicit in a denoiser.
\newblock \emph{Advances in Neural Information Processing Systems}, 34:\penalty0 13242--13254, 2021.

\bibitem[Kadkhodaie and Simoncelli(2020)]{kadkhodaie2020solving}
Zahra Kadkhodaie and Eero~P Simoncelli.
\newblock Solving linear inverse problems using the prior implicit in a denoiser.
\newblock \emph{arXiv preprint arXiv:2007.13640}, 2020.

\bibitem[Karras et~al.(2018)Karras, Aila, Laine, and Lehtinen]{karras2018progressive}
Tero Karras, Timo Aila, Samuli Laine, and Jaakko Lehtinen.
\newblock Progressive growing of gans for improved quality, stability, and variation.
\newblock In \emph{International Conference on Learning Representations}, 2018.

\bibitem[Karras et~al.(2019)Karras, Laine, and Aila]{karras2019style}
Tero Karras, Samuli Laine, and Timo Aila.
\newblock A style-based generator architecture for generative adversarial networks.
\newblock In \emph{Proceedings of the IEEE/CVF conference on computer vision and pattern recognition}, pages 4401--4410, 2019.

\bibitem[Kawar et~al.(2021)Kawar, Vaksman, and Elad]{kawar2021snips}
Bahjat Kawar, Gregory Vaksman, and Michael Elad.
\newblock Snips: Solving noisy inverse problems stochastically.
\newblock \emph{Advances in Neural Information Processing Systems}, 34:\penalty0 21757--21769, 2021.

\bibitem[Kawar et~al.(2022)Kawar, Elad, Ermon, and Song]{kawar2022denoising}
Bahjat Kawar, Michael Elad, Stefano Ermon, and Jiaming Song.
\newblock Denoising diffusion restoration models.
\newblock \emph{arXiv preprint arXiv:2201.11793}, 2022.

\bibitem[Ledig et~al.(2017)Ledig, Theis, Husz{\'a}r, Caballero, Cunningham, Acosta, Aitken, Tejani, Totz, Wang, et~al.]{ledig2017photo}
Christian Ledig, Lucas Theis, Ferenc Husz{\'a}r, Jose Caballero, Andrew Cunningham, Alejandro Acosta, Andrew Aitken, Alykhan Tejani, Johannes Totz, Zehan Wang, et~al.
\newblock Photo-realistic single image super-resolution using a generative adversarial network.
\newblock In \emph{Proceedings of the IEEE conference on computer vision and pattern recognition}, pages 4681--4690, 2017.

\bibitem[Lipman et~al.(2022)Lipman, Chen, Ben-Hamu, Nickel, and Le]{lipman2022flow}
Yaron Lipman, Ricky~TQ Chen, Heli Ben-Hamu, Maximilian Nickel, and Matt Le.
\newblock Flow matching for generative modeling.
\newblock \emph{arXiv preprint arXiv:2210.02747}, 2022.

\bibitem[Liu et~al.(2022)Liu, Gong, and Liu]{liu2022flow}
Xingchao Liu, Chengyue Gong, and Qiang Liu.
\newblock Flow straight and fast: Learning to generate and transfer data with rectified flow.
\newblock \emph{arXiv preprint arXiv:2209.03003}, 2022.

\bibitem[Ma et~al.(2024)Ma, Goldstein, Albergo, Boffi, Vanden-Eijnden, and Xie]{ma2024sit}
Nanye Ma, Mark Goldstein, Michael~S Albergo, Nicholas~M Boffi, Eric Vanden-Eijnden, and Saining Xie.
\newblock Sit: Exploring flow and diffusion-based generative models with scalable interpolant transformers.
\newblock \emph{arXiv preprint arXiv:2401.08740}, 2024.

\bibitem[Meng and Kabashima(2023)]{meng2023quantized}
Xiangming Meng and Yoshiyuki Kabashima.
\newblock Quantized compressed sensing with score-based generative models.
\newblock In \emph{International Conference on Learning Representations}, 2023.

\bibitem[Meng and Kabashima(2024)]{meng2024qcs}
Xiangming Meng and Yoshiyuki Kabashima.
\newblock Qcs-sgm+: Improved quantized compressed sensing with score-based generative models.
\newblock In \emph{Proceedings of the AAAI Conference on Artificial Intelligence}, volume~38, pages 14341--14349, 2024.

\bibitem[O'Sullivan(1986)]{o1986statistical}
Finbarr O'Sullivan.
\newblock A statistical perspective on ill-posed inverse problems.
\newblock \emph{Statistical science}, pages 502--518, 1986.

\bibitem[Pokle et~al.(2023)Pokle, Muckley, Chen, and Karrer]{pokle2023training}
Ashwini Pokle, Matthew~J Muckley, Ricky~TQ Chen, and Brian Karrer.
\newblock Training-free linear image inversion via flows.
\newblock \emph{arXiv preprint arXiv:2310.04432}, 2023.

\bibitem[Robbins(1992)]{robbins1992empirical}
Herbert~E Robbins.
\newblock An empirical bayes approach to statistics.
\newblock In \emph{Breakthroughs in Statistics: Foundations and basic theory}, pages 388--394. Springer, 1992.

\bibitem[Rombach et~al.(2022)Rombach, Blattmann, Lorenz, Esser, and Ommer]{rombach2022high}
Robin Rombach, Andreas Blattmann, Dominik Lorenz, Patrick Esser, and Bj{\"o}rn Ommer.
\newblock High-resolution image synthesis with latent diffusion models.
\newblock In \emph{Proceedings of the IEEE/CVF Conference on Computer Vision and Pattern Recognition}, pages 10684--10695, 2022.

\bibitem[Sohl-Dickstein et~al.(2015)Sohl-Dickstein, Weiss, Maheswaranathan, and Ganguli]{sohl2015deep}
Jascha Sohl-Dickstein, Eric Weiss, Niru Maheswaranathan, and Surya Ganguli.
\newblock Deep unsupervised learning using nonequilibrium thermodynamics.
\newblock In \emph{International Conference on Machine Learning}, pages 2256--2265. PMLR, 2015.

\bibitem[Song et~al.(2020)Song, Meng, and Ermon]{song2020denoising}
Jiaming Song, Chenlin Meng, and Stefano Ermon.
\newblock Denoising diffusion implicit models.
\newblock \emph{arXiv preprint arXiv:2010.02502}, 2020.

\bibitem[Song et~al.(2022)Song, Vahdat, Mardani, and Kautz]{song2022pseudoinverse}
Jiaming Song, Arash Vahdat, Morteza Mardani, and Jan Kautz.
\newblock Pseudoinverse-guided diffusion models for inverse problems.
\newblock In \emph{International Conference on Learning Representations}, 2022.

\bibitem[Song and Ermon(2019)]{song2019generative}
Yang Song and Stefano Ermon.
\newblock Generative modeling by estimating gradients of the data distribution.
\newblock \emph{Advances in Neural Information Processing Systems}, 32, 2019.

\bibitem[Song et~al.(2023)Song, Dhariwal, Chen, and Sutskever]{song2023consistency}
Yang Song, Prafulla Dhariwal, Mark Chen, and Ilya Sutskever.
\newblock Consistency models.
\newblock \emph{arXiv preprint arXiv:2303.01469}, 2023.

\bibitem[Ulyanov et~al.(2018)Ulyanov, Vedaldi, and Lempitsky]{ulyanov2018deep}
Dmitry Ulyanov, Andrea Vedaldi, and Victor Lempitsky.
\newblock Deep image prior.
\newblock In \emph{Proceedings of the IEEE conference on computer vision and pattern recognition}, pages 9446--9454, 2018.

\bibitem[Wang et~al.(2022)Wang, Yu, and Zhang]{wang2022zero}
Yinhuai Wang, Jiwen Yu, and Jian Zhang.
\newblock Zero-shot image restoration using denoising diffusion null-space model.
\newblock \emph{arXiv preprint arXiv:2212.00490}, 2022.

\bibitem[Wang et~al.(2004)Wang, Bovik, Sheikh, and Simoncelli]{wang2004image}
Zhou Wang, Alan~C Bovik, Hamid~R Sheikh, and Eero~P Simoncelli.
\newblock Image quality assessment: from error visibility to structural similarity.
\newblock \emph{IEEE transactions on image processing}, 13\penalty0 (4):\penalty0 600--612, 2004.

\bibitem[Yuan et~al.(2007)Yuan, Sun, Quan, and Shum]{yuan2007image}
Lu~Yuan, Jian Sun, Long Quan, and Heungyeung Shum.
\newblock Image deblurring with blurred/noisy image pairs.
\newblock In \emph{Proceedings of the 34th ACM SIGGRAPH Conference on Computer Graphics, 34th Annual Meeting of the Association for Computing Machinery's Special Interest Group on Graphics; San Diego, CA; United States}, 2007.

\bibitem[Zhang et~al.(2016)Zhang, Isola, and Efros]{zhang2016colorful}
Richard Zhang, Phillip Isola, and Alexei~A Efros.
\newblock Colorful image colorization.
\newblock In \emph{European conference on computer vision}, pages 649--666. Springer, 2016.

\bibitem[Zhang et~al.(2018)Zhang, Isola, Efros, Shechtman, and Wang]{zhang2018unreasonable}
Richard Zhang, Phillip Isola, Alexei~A Efros, Eli Shechtman, and Oliver Wang.
\newblock The unreasonable effectiveness of deep features as a perceptual metric.
\newblock In \emph{Proceedings of the IEEE conference on computer vision and pattern recognition}, pages 586--595, 2018.

\end{thebibliography}

\appendix

\section{Verification of Assumption \ref{uninformative-assumption}}
\label{appendix:uninformative-prior}
Here we provide a theoretical support of the uninformative prior assumption \ref{uninformative-assumption}, or, equivalently,  the following Gaussian approximation of the posterior $p(\mathbf{x} _ 0 | \mathbf{x} _ t)$: 
\begin{align}
p(\mathbf{x} _ 0 | \mathbf{x} _ t) \approx   \mathcal{N}(\frac{\mathbf{x} _ t}{a_t} , b_t^2 \bf{I}), \label{eq-original}
\end{align}
Throughout the following derivations, we will drop any additive constants in the log (which translate to normalizing factors), and drop all terms of order $O(t)$.

Let us start with the original Bayes' formula (using the log form):

\begin{align}
\log p(\mathbf{x}_0|\mathbf{x}_t) = \log p(\mathbf{x}_t|\mathbf{x}_0) + \log p_0(\mathbf{x}_0) - \log p_t(\mathbf{x}_t),
\end{align}
where $p _ 0 (\mathbf{x}_0) $ and $ p_t(\mathbf{x} _ t) $ denote the marginal distribution of $ \mathbf{x} _ 0 $ and $ \mathbf{x} _ t $, respectively. 

Since $ p_{t-\Delta t}(\cdot) = p_t(\cdot) + \Delta t \frac{\partial}{\partial t} p_t(\cdot) + \mathcal{O}(\Delta t) $  for $ |t| \ll 1 $, there is
\begin{align}
\log p_0(\mathbf{x}_0|\mathbf{x}_t) = \log p(\mathbf{x}_t|\mathbf{x}_0) + \log p_t(\mathbf{x}_0) + \mathcal{O}(t) - \log p_t(\mathbf{x}_t). \label{11}
\end{align}

For (\ref{11}), we perform a first order Taylor expansion of $\log p_t(\mathbf{x}_0) $ around $\mathbf{x}_t $, which yields
\begin{align}
\log p_0(\mathbf{x}_0|\mathbf{x}_t) &= \log p(\mathbf{x}_t|\mathbf{x}_0) + \log p_t(\mathbf{x}_t) + \langle \nabla _ {\mathbf{x}_t} \log p_t(\mathbf{x}_t), \mathbf{x}_0 - \mathbf{x}_t \rangle + \mathcal{O}(t) - \log p_t(\mathbf{x}_t) \nonumber \\
&= \log p(\mathbf{x}_t|\mathbf{x}_0) + \langle \nabla _ {\mathbf{x}_t} \log p_t(\mathbf{x}_t), \mathbf{x}_0 - \mathbf{x}_t \rangle + \mathcal{O}(t). 
\end{align}
Substituting  $ p(\mathbf{x}_t | \mathbf{x}_0) = \mathcal{N}(a_t \mathbf{x}_0, b_t^2) $ and completing the squares, we obtain:
\begin{align}
\log p(\mathbf{x}_0 | \mathbf{x} _ t) &= -\frac{\lVert \mathbf{x}_t - a_t \mathbf{x}_0 \rVert^2}{2b_t^2} + \langle \nabla _ {\mathbf{x}_t} \log p_t (\mathbf{x}_t), \mathbf{x}_0 - \mathbf{x}_t \rangle + \mathcal{O}(t)
&=  -\frac{1}{2b_t^2} \lVert \mathbf{x}_0 - \boldsymbol{\mu} \rVert^2 + C,
\end{align}
where $C$ is a constant value and the mean value $\boldsymbol{\mu} $ is:
\begin{align}
\boldsymbol{\mu} = \frac{\mathbf{x} _ t}{a_t} + \frac{b_t^2}{a_t^2} \nabla_{\mathbf{x}_t} \log p_t (\mathbf{x}_t)
\end{align}
Therefore, we obtain that the posterior distribution $p(\mathbf{x} _ 0 | \mathbf{x} _ t)$ can be approximated as a Gaussian 
\begin{align}
p(\mathbf{x} _ 0 | \mathbf{x} _ t) \approx   \mathcal{N}(\frac{\mathbf{x} _ t}{a_t} + \frac{b_t^2}{a_t^2} \nabla_{\mathbf{x}_t} \log p_t (\mathbf{x}_t), b_t^2 \bf{I}) \label{eq-final}
\end{align}

Comparing eqs.  (\ref{eq-original}) and  (\ref{eq-final}), we can see that in our result (1), we further ignore the term $\frac{b_t^2}{a_t^2} \nabla_{\mathbf{x}_t} \log p_t (\mathbf{x}_t)$ in the mean value. This is valid for sufficiently small $t$ since the variance $b^2_t$ is sufficiently small following the special design principle in forward diffusion process. For example, for DDPM and flow-based model considered in our manuscript, $b^2_t = 1- \bar{\alpha} _ t $, $b^2_t = t^2$, respectively.

Reflecting on this derivation, the main idea is that for a sufficiently small  $ t$, the Bayes' rule expansion of $p(\mathbf{x}_{0} \mid \mathbf{x}_t)$ (recall that this is what we need to compute the likelihood score) is primarily influenced by the term  $p(\mathbf{x} _ t \mid \mathbf{x} _ {0})$ from the forward process, regardless of the prior of $p(\mathbf{x} _ {0})$. As a result, the uninformative prior assumption  is reasonable for sufficiently small $t$.  In fact, this  insight is exactly why in the diffusion models the reverse process and the forward process  share the same functional form for sufficiently small time interval. It is worth pointing out that, the validity of the above results does not depend on the underlying distribution $p_0(\mathbf{x} _ 0)$, whether it being a simple Gaussian or a complex distribution as that of a face image.

\begin{figure}
\centering  
\includegraphics[width=0.9\textwidth]{./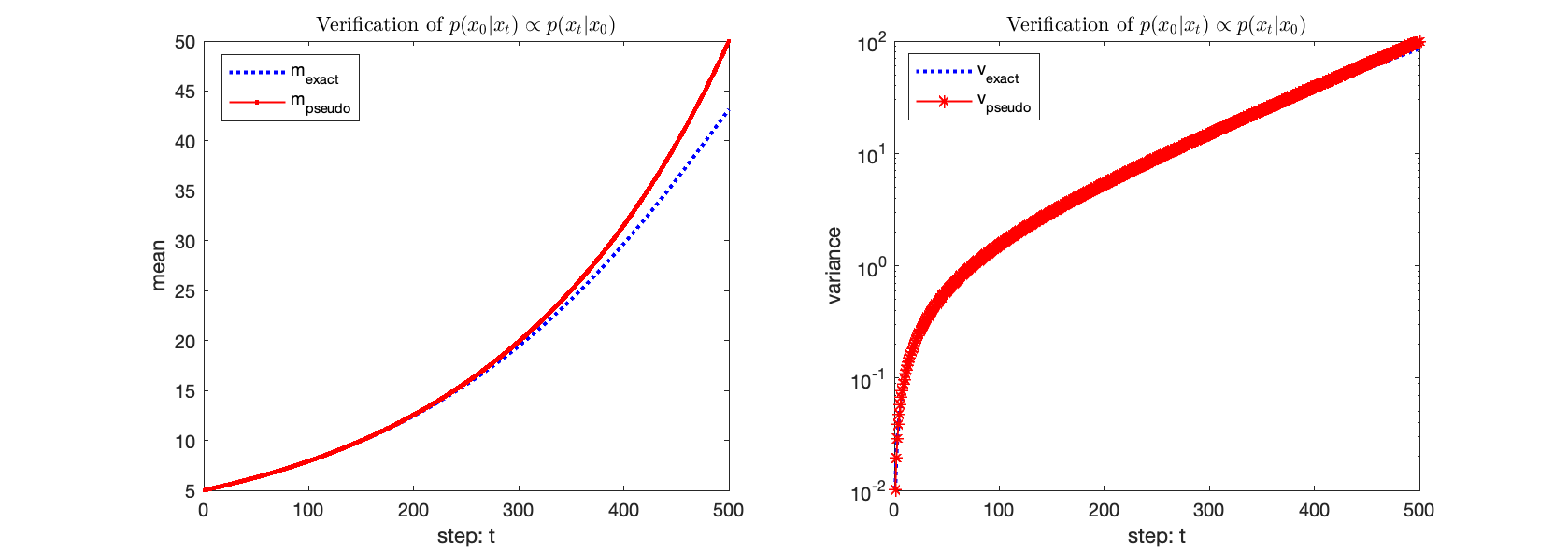}
\caption{Comparison of the exact mean and variance of  $p(  {{x}}_0 | {{x}_t})$  with the pseudo mean and variance under the uninformative assumption, i.e.,  $p(  {{x}_0} | {{x}_t}) \propto p( {{x}}_t |   {{x}_0} ) $. }
\label{fig:uninformative-toy}
\end{figure}

\noindent \textbf{A toy example:} We further consider a toy  example to illustrate this where  the exact form of  $p({\bf{x}}_0 | {\bf{x}}_t )$ in (\ref{eq:pdf_condition_x}) can be computed exactly. Assume that  $\bf{x}$ reduces to a scalar random variable $x$ and the associated prior $p(x)$ follows a Gaussian distribution, i.e., $p(x) = \mathcal{N}(x;0,\sigma^2_0)$, where $\sigma^2$ is the prior variance. The likelihood $p({\bf{x}}_t ) | {\bf{x}}_0)$ (\ref{forward-process}) in this case is simply $p({{x}}_t | {{x}}_0) =  \mathcal{N}({x_t};\sqrt{{\bar{\alpha}}_t} x_0,(1-\bar{\alpha}_t))$.

Then, from (\ref{eq:pdf_condition_x}), after some algebra, it can be computed that the posterior distribution $p({{x}}_0 | {{x}_t})$ is 
\begin{align}
    p(  {{x}_0} | {{x}_t}) = \mathcal{N}({x}_0;m_\textrm{exact},v_\textrm{exact})
\end{align}
where 
\begin{align}
    m_\textrm{exact} = \frac{\sqrt{{\bar{\alpha}}_t}\sigma_0^2}{(1-{\bar{\alpha}}_t) + {\bar{\alpha}}_t \sigma^2_0} {x}_t, \;   v_\textrm{exact} = \frac{(1-{\bar{\alpha}}_t)\sigma^2_0}{(1-{\bar{\alpha}}_t) + {\bar{\alpha}}_t \sigma^2_0}. 
    \label{exact-mean-variance}
\end{align}
Under the Assumption \ref{uninformative-assumption}, i.e., $p(  {{x}_0} | {{x}_t}) \propto p( {{x}_t} |  {{x}_0} ) $, we obtain an approximation of $p(  {{x}} | {{x}_t}) $ as follows
\begin{align}
    p(  {{x}_0} | {{x}_t}) \simeq  \tilde{p}(  {{x}_0} | {{x}_t}) =  \mathcal{N}({x}_0;m_\textrm{pseudo},v_\textrm{pseudo}),  
\end{align}
where 
\begin{align}
    m_\textrm{pseudo} = \frac{1}{\sqrt{{\bar{\alpha}}_t}} {x}_t, \;   v_\textrm{pseudo} = \frac{1-{{\bar{\alpha}}_t}}{\bar{\alpha}}. \label{pseudo-mean-variance}
\end{align}
By comparing the exact result (\ref{exact-mean-variance}) and approximation result (\ref{pseudo-mean-variance}), it can be easily seen that for a fixed $\sigma^2_0>0$, as $\bar{\alpha}_t\to 1$, we have $m_\textrm{pseudo} \to m_\textrm{post}$ and  $v_\textrm{pseudo} \to v_\textrm{post}$, which is exactly the case for DDPM as $t\to 1$. To see this, we anneal $\bar{\alpha}_t$ as $\bar{\alpha}_t = {\bar{\alpha}}_{\rm{max}} (\frac{{\bar{\alpha}}_{\rm{min}}}{{\bar{\alpha}}_{\rm{max}}})^{\frac{t-1}{T-1}}$ geometrically and compare $ m_\textrm{pseudo},  v_\textrm{pseudo}$ with $m_\textrm{exact}, v_\textrm{exact}$ as $t$ increase from $1$ to $T$. Assume that ${\bar{\alpha}}_{\rm{min}} = 0.01$ and ${\bar{\alpha}}_{\rm{min}} = 0.99$, and $\sigma_0 = 25,  x_t = 5,  T = 500$, we obtain the results in Fig. \ref{fig:uninformative-toy}.  It can be seen in Fig. \ref{fig:uninformative-toy} that the approximated values $ m_\textrm{pseudo},  v_\textrm{pseudo}$, especially the variance $v_\textrm{pseudo}$,  approach to the exact values $m_\textrm{exact}, v_\textrm{exact}$ very quickly, verifying the effectiveness of the Assumption \ref{uninformative-assumption} under this toy example.




\section{Effect of Scaling Parameter $\lambda$}
\label{appendix:effect-lambda}
As shown in both Algorithm \ref{alg: posterior-sampling-algorithm} and Algorithm \ref{alg: flow-based}, a hyper-parameter $\lambda$ is introduced as a scaling value for the likelihood score.  Empirically it is found that DMPS is robust to different choices of $\lambda$ around 1 though most of the time $\lambda>1$ yields slightly better results. As one specific example, we show the results of DMPS for super-resolution for different values of $\lambda$, as shown in Figure \ref{fig:effect-of-lambda} (DDPM version) and Figure \ref{fig:effect-of-lambda-flow} (flow-based version). It can be seen that DMPS is robust to different choices of $\lambda$, i.e., it works  well in a wide range of values.

\begin{figure*}[!h]
\centering  
\includegraphics[width=0.9\textwidth]{./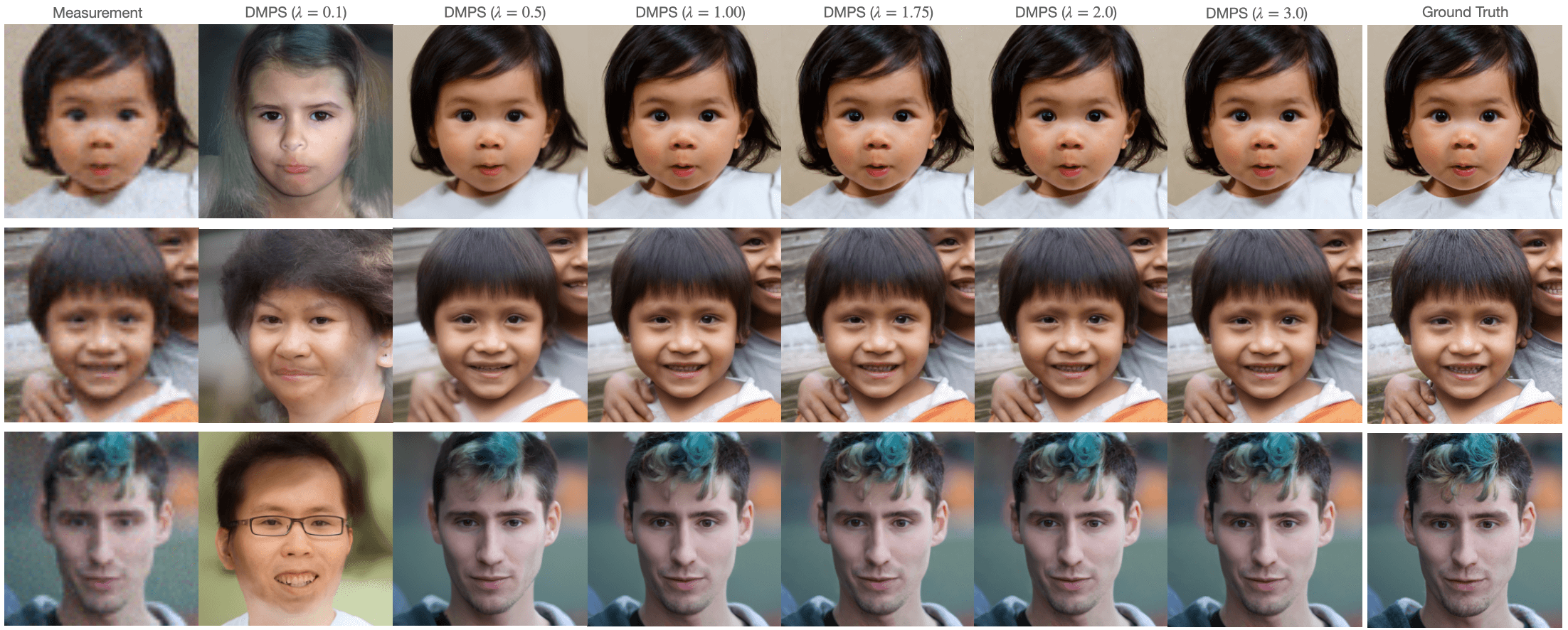}
\caption{Results of DMPS (DDPM version) with different $\lambda$ for the task of noisy super-resolution ($\times 4$) with $\sigma = 0.05$.}
\label{fig:effect-of-lambda}
\end{figure*}

\begin{figure*}[!h]
\centering  
\includegraphics[width=0.9\textwidth]{./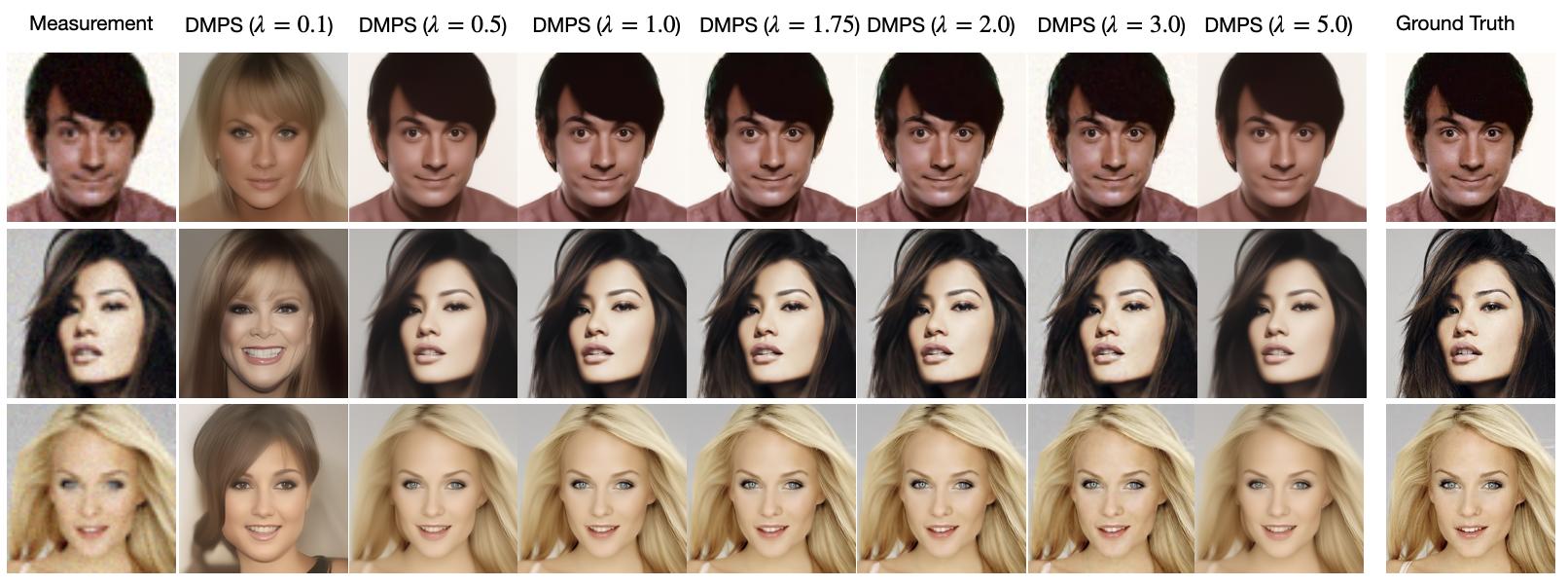}
\caption{Results of DMPS (flow-based version) with different $\lambda$ for the task of noisy super-resolution ($\times 4$) with $\sigma = 0.05$.}
\label{fig:effect-of-lambda-flow}
\end{figure*}

\section{Results on More Datasets}
\label{append-more-results}

We provide more experimental results on AFHQ-cat and LSUN-bedroom for flow-based models are shown as follows:

\begin{table*}[h!]
\scriptsize 
\centering
\setlength{\tabcolsep}{-1pt}
\begin{adjustbox}{width=\linewidth,center}
\begin{tabular*}{\linewidth}{
  @{\extracolsep{\fill}}
  ccccccccccccc
}
\toprule
& \mc{3}{c}{\textbf{super-resolution}} & \mc{3}{c}{\textbf{deblur}} & \mc{3}{c}{\textbf{colorization}}  & \mc{3}{c}{\textbf{denoising}} \\
\cmidrule{2-4} \cmidrule{5-7} \cmidrule{8-10} \cmidrule{11-13} 
\textbf{Method} & {PSNR $\uparrow$} & {SSIM $\uparrow$} & {LPIPS $\downarrow$} 
& {PSNR $\uparrow$} & {SSIM $\uparrow$} & {LPIPS $\downarrow$} 
& {PSNR $\uparrow$} & {SSIM $\uparrow$} & {LPIPS $\downarrow$} 
& {PSNR $\uparrow$} & {SSIM $\uparrow$} & {LPIPS $\downarrow$} \\

\toprule
DMPS (DDPM, ours) & \textbf{26.79} & \textbf{0.7653} & \textbf{0.2632} & \textbf{27.22} & \textbf{0.7571} & \textbf{0.2909} & \textbf{25.07} & \textbf{0.9190} & \textbf{0.3124} & 28.59 & \textbf{0.7994} & \textbf{0.2882} \\
\midrule
DPS (DDPM) & 23.08 & 0.6127 & 0.3860 & 24.64 & 0.6625 & 0.3033 & 15.92 & 0.5976 & 0.6381 & \textbf{28.86} & 0.7828 & 0.2941 \\
PGDM & 25.44 & 0.7185 & 0.2837 & 26.69 & 0.7316 & 0.2896 & 16.74 & 0.6348 & 0.5335 & 27.06 & 0.7453 & 0.3236 \\
\bottomrule
\end{tabular*}
\end{adjustbox}
\caption{Results on FFHQ-Cat validation dataset using the same pre-trained DDPM model.}
\label{table:ffhq-cat}
\end{table*}

\begin{table*}[h!]
\scriptsize 
\centering
\setlength{\tabcolsep}{-1pt}
\begin{adjustbox}{width=\linewidth,center}
\begin{tabular*}{\linewidth}{
  @{\extracolsep{\fill}}
  ccccccccccccc
}
\toprule
& \mc{3}{c}{\textbf{super-resolution}} & \mc{3}{c}{\textbf{deblur}} & \mc{3}{c}{\textbf{colorization}}  & \mc{3}{c}{\textbf{denoising}} \\
\cmidrule{2-4} \cmidrule{5-7} \cmidrule{8-10} \cmidrule{11-13} 
\textbf{Method} & {PSNR $\uparrow$} & {SSIM $\uparrow$} & {LPIPS $\downarrow$} 
& {PSNR $\uparrow$} & {SSIM $\uparrow$} & {LPIPS $\downarrow$} 
& {PSNR $\uparrow$} & {SSIM $\uparrow$} & {LPIPS $\downarrow$} 
& {PSNR $\uparrow$} & {SSIM $\uparrow$} & {LPIPS $\downarrow$} \\

\toprule
DMPS (DDPM, ours) & \textbf{25.63} & \textbf{0.7362} & \textbf{0.2281} & \textbf{28.21} & \textbf{0.8162} & \textbf{0.2113} & \textbf{23.19} & \textbf{0.9344} & \textbf{0.2117} & 29.81 & 0.8599 & 0.1884 \\
\midrule
DPS (DDPM) & 22.83 & 0.6190 & 0.3275 & 24.97 & 0.6988 & 0.2593 & 11.38 & 0.5375 & 0.6606 & \textbf{30.75} & \textbf{0.8674} & \textbf{0.1841} \\
PGDM & 24.60 & 0.6854 & 0.2590 & 26.90 & 0.7721 & 0.2482 & 17.69 & 0.7335 & 0.3350 & 27.90 & 0.8153 & 0.2304 \\
\bottomrule
\end{tabular*}
\end{adjustbox}
\caption{Results on LSUN-Bedroom validation dataset using the same pre-trained DDPM model.}
\label{table:lsun-ddpm}
\end{table*}

\begin{table*}[h!]
\scriptsize 
\centering
\setlength{\tabcolsep}{-1pt}
\begin{adjustbox}{width=\linewidth,center}
\begin{tabular*}{\linewidth}{
  @{\extracolsep{\fill}}
  ccccccccccccc
}
\toprule
& \mc{3}{c}{\textbf{super-resolution}} & \mc{3}{c}{\textbf{deblur}} & \mc{3}{c}{\textbf{colorization}}  & \mc{3}{c}{\textbf{denoising}} \\
\cmidrule{2-4} \cmidrule{5-7} \cmidrule{8-10} \cmidrule{11-13} 
\textbf{Method} & {PSNR $\uparrow$} & {SSIM $\uparrow$} & {LPIPS $\downarrow$} 
& {PSNR $\uparrow$} & {SSIM $\uparrow$} & {LPIPS $\downarrow$} 
& {PSNR $\uparrow$} & {SSIM $\uparrow$} & {LPIPS $\downarrow$} 
& {PSNR $\uparrow$} & {SSIM $\uparrow$} & {LPIPS $\downarrow$} \\

\toprule
DMPS (Flow-based, ours) & \textbf{29.06} & \textbf{0.7905} & \textbf{0.2627} & \textbf{26.74} & \textbf{0.6942} & \textbf{0.3192} & 24.65 & \textbf{0.9140} & \textbf{0.2531} & \textbf{26.53} & \textbf{0.7870} & \textbf{0.3353} \\
\midrule
DPS (Flow-based) & 27.61 & 0.7089 & 0.3190 & 23.26 & 0.5534 & 0.4122 & 21.64 & 0.8259 & 0.3833 & 26.10 & 0.6418 & 0.4049 \\
OT-ODE & 27.61 & 0.7081 & 0.3205 & 26.32 & 0.6592 & 0.3333 & \textbf{25.21} & 0.8692 & 0.3180 & 23.12 & 0.3647 & 0.5289 \\
\bottomrule
\end{tabular*}
\end{adjustbox}
\caption{Results on AFHQ-Cat validation dataset using the same pre-trained flow-based model.}
\label{table:afhq-cat}
\end{table*}

\begin{table*}[h!]
\scriptsize 
\centering
\setlength{\tabcolsep}{-1pt}
\begin{adjustbox}{width=\linewidth,center}
\begin{tabular*}{\linewidth}{
  @{\extracolsep{\fill}}
  ccccccccccccc
}
\toprule
& \mc{3}{c}{\textbf{super-resolution}} & \mc{3}{c}{\textbf{deblur}} & \mc{3}{c}{\textbf{colorization}}  & \mc{3}{c}{\textbf{denoising}} \\
\cmidrule{2-4} \cmidrule{5-7} \cmidrule{8-10} \cmidrule{11-13} 
\textbf{Method} & {PSNR $\uparrow$} & {SSIM $\uparrow$} & {LPIPS $\downarrow$} 
& {PSNR $\uparrow$} & {SSIM $\uparrow$} & {LPIPS $\downarrow$} 
& {PSNR $\uparrow$} & {SSIM $\uparrow$} & {LPIPS $\downarrow$} 
& {PSNR $\uparrow$} & {SSIM $\uparrow$} & {LPIPS $\downarrow$} \\

\toprule
DMPS (Flow-based, ours) & \textbf{24.36} & \textbf{0.6795} & 0.3837 & \textbf{23.19} & \textbf{0.5869} & \textbf{0.4384} & \textbf{23.37} & \textbf{0.8756} & \textbf{0.2838} & 22.68 & \textbf{0.6477} & \textbf{0.4458} \\
\midrule
DPS (Flow-based) & 24.39 & 0.6430 & \textbf{0.3781} & 20.13 & 0.4318 & 0.4931 & 11.03 & 0.5283 & 0.7843 & 23.18 & 0.5457 & 0.4598 \\
OT-ODE & 23.88 & 0.6193 & 0.4001 & 22.69 & 0.5590 & 0.4264 & \textbf{23.62} & 0.7592 & 0.3923 & 18.17 & 0.2039 & 0.6405 \\
\bottomrule
\end{tabular*}
\end{adjustbox}
\caption{Results on LSUN-Bedroom validation dataset using the same pre-trained flow-based model.}
\label{table:lsun-bedroom}
\end{table*}

\end{document}